\title{Is the Skip Connection Provable to Reform the Neural Network Loss Landscape?}
\author{
Lifu Wang$^1$ \and
Bo Shen$^1$\footnote{Coresponding Author}\and
Ning Zhao$^{2}$\And
Zhiyuan Zhang$^1$
\affiliations
$^1$Department of Electronic and Information Engineering, Beijing Jiaotong University, Beijing, China\\
Key Laboratory of Communication and Information Systems, Beijing Municipal Commission of Education Beijing Jiaotong University, Beijing, China\\
$^2$Department of Electronic and Information Engineering, Beijing Jiaotong University, Beijing, China\\
State Key Lab of Rail Traffic Control and Safety, Beijing Jiaotong University, Beijing, China
\emails
\{Lifu\_Wang, bshen, n\_zhao, zhangzhiyuan\}@bjtu.edu.cn
}
\newtheorem{theo}{\textbf{Theorem}}
\newtheorem{lem}[theo]{\textbf{Lemma}}
\newtheorem{ass}{\textbf{Assumption}}
\newtheorem{rem}{\textbf{Remark}}[section]
\begin{document}
\maketitle

\begin{abstract}
The residual network is now one of the most effective structures in deep learning, which utilizes the skip connections to ``guarantee" the performance will not get worse. However, the non-convexity of the neural network makes it unclear whether the skip connections do provably improve the learning ability since the nonlinearity may create many local minima. In some previous works \cite{freeman2016topology}, it is shown that despite the non-convexity, the loss landscape of the two-layer ReLU network has good properties when the number $m$ of hidden nodes is very large. In this paper, we follow this line to study the topology (sub-level sets) of the loss landscape of deep ReLU neural networks with a skip connection and theoretically prove that the skip connection network inherits the good properties of the two-layer network and skip connections can  help to control the connectedness of the sub-level sets, such that any local minima worse than the global minima of some two-layer ReLU network will be very ``shallow". The ``depth" of these local minima are at most $O(m^{(\eta-1)/n})$, where $n$ is the input dimension, $\eta<1$. This provides a theoretical explanation for the effectiveness of the skip connection in deep learning.

\end{abstract}

\section{Introduction}
Although deep learning has achieved great success in almost all the fields of machine learning, understanding the abilities of deep learning theoretically is still a hard problem. A neural network with a large number of hidden nodes has been proved to have strong expressive powers \cite{BarronUniversal}, but the non-convexity makes the model hard to be learned.  The pioneering work in \cite{Krizhevsky2012imageNet} utilized ReLU to improve the performance of deep networks, but ReLU is insufficient to train very deep ones. Resnet \cite{7780459,he2016identity} is the most efficient structure after Alexnet, which utilizes skip connections to let the performance not get worse as the number of the layers increasing, yet due to the non-convexity of the loss function, a rigorous analysis of this property is not easy. There are lots of questions about the effect of this structure. For example, will the skip connections eliminate bad local minima created by the nonlinearity in residual blocks?

\begin{figure}
\centering
\subfigure[Without skip connections]{
\begin{minipage}[c]{0.45\linewidth}
\centering
\includegraphics[width=1.0\textwidth]{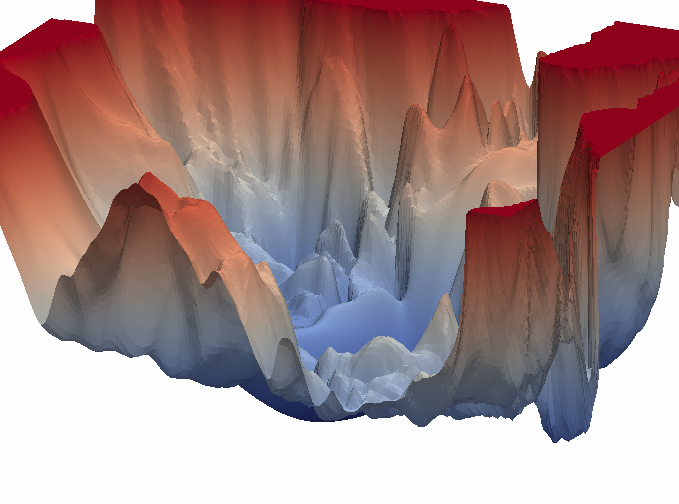}
\end{minipage}%
}%
\subfigure[Skip connections network]{
\begin{minipage}[c]{0.45\linewidth}
\centering
\includegraphics[width=1.0\textwidth]{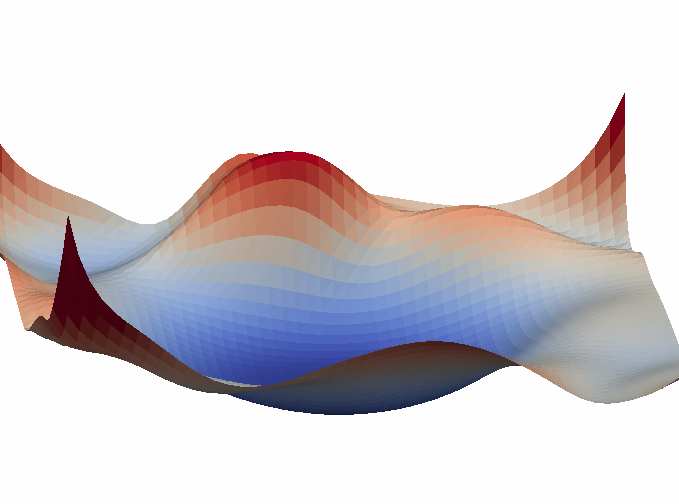}
\end{minipage}%
}%
\centering
\caption{The loss surfaces of ResNet-56 projected into 3d with/without skip connections.\protect\footnotemark[1]}
\end{figure}
\footnotetext[1]{This figure is plotted using ParaView and the visualization method in \cite{NIPS2018_7875}. Their code is available at \url{https://github.com/tomgoldstein/loss-landscape}.}

The problem of local minima is one of the most important questions in the theoretical study of neural networks. Gradient descent based algorithms will stop near the area of saddle points and local minima. By adding noise, it is possible to escape strictly saddle points and shallow local minima \cite{Zhang2017A}, but distinguishing a bad but deep local minimum from a global one can be very hard \cite{jin2018on}. If the loss of the suboptimal local minima created by the nonlinearity and multi-layer structures is very large, the performance of such networks may be very bad. However it has been shown that bad local minima are common in the non-over-parameterized two-layer networks \cite{safran2018spurious}, so that it is generally hard to guarantee there is no bad local minima.
Thus two questions arise naturally:

1.Since the loss in deep learning is not convex, what does the loss landscape of the deep neural network look like?

2.Since the residual network is now very successful, is the deep neural network with skip connections provably good such that more layers will not create more bad local minima?

There are some important works on these two questions. In the study of the landscape of the neural network, one of the most important properties is the ``mode connectivity''\cite{kuditipudi2019explaining} of networks, which has been proved in theory in \cite{freeman2016topology}, and shown by experiments in \cite{article,draxleressentially}. In these works, they showed that the sub-level sets of the loss function are nearly connecting, and the local minima found by gradient descent can be connected by some continuous paths in the parameter space and the loss on the path is almost const, such that the landscape has very good properties. One can guess these properties are true for any neural network, but it is only proved for two-layer ReLU networks in \cite{freeman2016topology}, and it's hard to go beyond this case.

Another important step is taken by the work in \cite{shamir2018are}, where the author compares the two models:
\begin{equation}\label{e1}
y=\textbf{W}^T(x+\textbf{V}g(\theta,x)),
\end{equation}

\begin{equation}\label{e2}
y=\textbf{W}^Tx.
\end{equation}

It is obvious that (\ref{e1}) has stronger  expressive powers than (\ref{e2}). Surprisingly, the work in \cite{shamir2018are} shows that all the local minima(with MSE loss) created by $g(\theta,x)$ in (\ref{e1}) are provably no worse than the global minimum of convex model (\ref{e2}). However, their method is heavily dependent on the convexity of (\ref{e2}), thus it is hard to be generalized to more general cases. For an arbitrary neural network with skip connections, whether the skip connections can eliminate bad local minimum worse than shallower networks is still a open problem.

In the empirical aspect, the work in \cite{NIPS2018_7875} proposed a visualization method and showed that the skip connection does make the landscape smoother, and the loss landscape has nearly convex behaviors. We plot the loss surface of Resnet-56 with and without skip connections in Figure 1, then we can see the ResNet-56 with skip connections does have smoother and better loss landscape. On the other hand, it is shown in \cite{veit2016residual} that after removing the nonlinear parts in residual layer leaving the skip connection only, the performance will not drop too much. Thus one may guess that residual paths have ensemble-like behaviors. However, due to the non-convexity of the neural network, the principle is hard to be analyzed rigorously in theory.

\begin{figure}
  \centering
  \includegraphics[width=.4\textwidth]{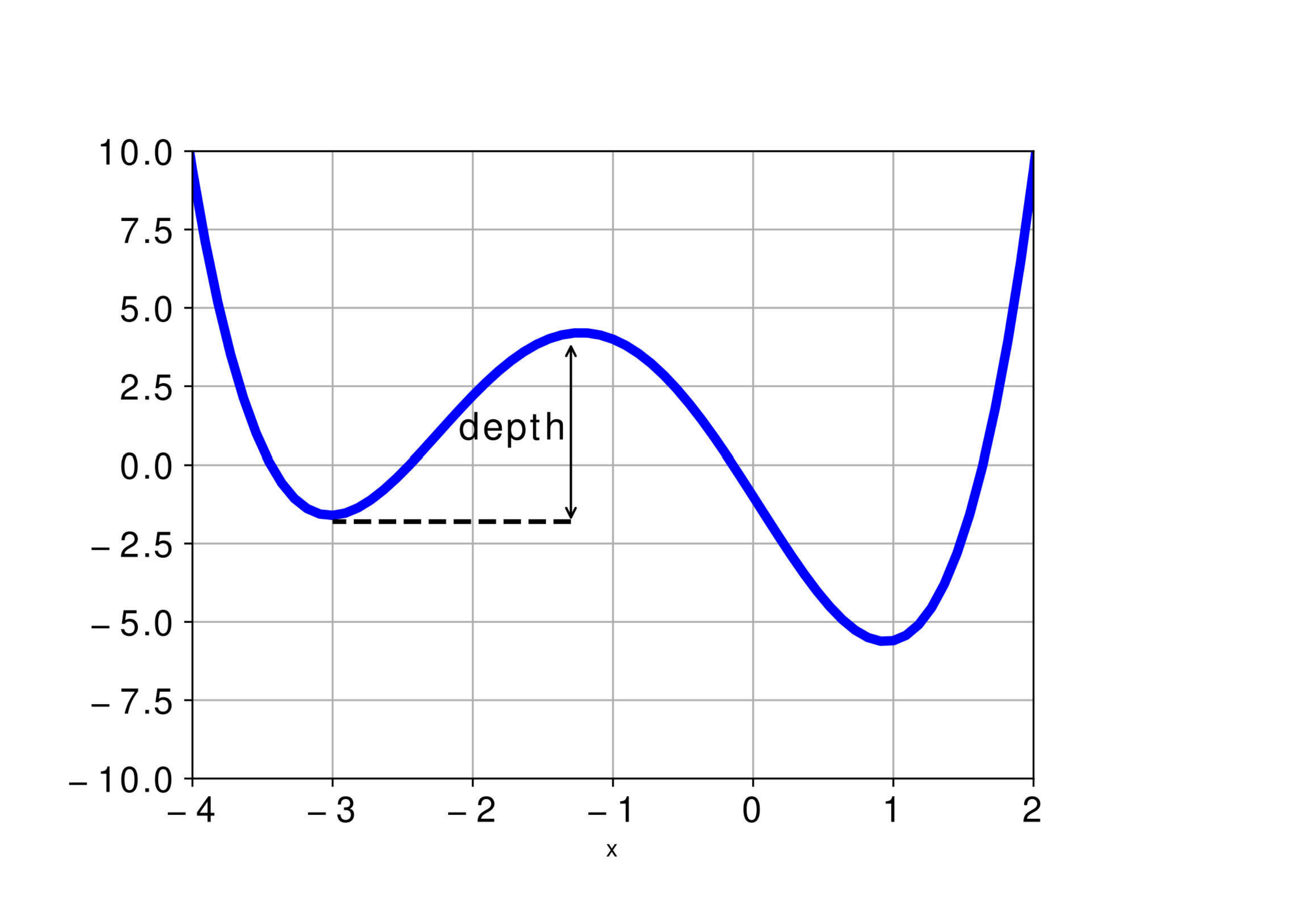}
  \caption{The depth of local minima (defined in Theorem \ref{gap})}
  \label{gap}
\end{figure}

In this work, we study these problems in the perspective of the sub-level sets of the loss landscape as in \cite{freeman2016topology} to estimate the  ``depth" of local minima. In \cite{freeman2016topology}, the authors studied the two-layer ReLU network, and proved that the ``Energy Gap" $\epsilon$ satisfies $\epsilon \approx O(m^{-\frac{1}{n}})$ where $m$ is the width and $n$ is the dimension of the input data, therefore in the large width case, the landscape of the two-layer ReLU network has nearly convex behaviors. Following this line, in this paper, we compare the two networks (We use $\xi$ to denote the parameters of the network, and $\sigma$ is the ReLU activation function):
\begin{equation}\label{ma}
\begin{aligned}
&f_1(\xi)=\textbf{W}_2[\sigma(\textbf{W}_1x)+\textbf{V}_1 g(\theta,\textbf{V}_2\sigma(\textbf{W}_1x))],\\
\end{aligned}
\end{equation}

\begin{equation}\label{ma2}
\begin{aligned}
&f_2(\xi)=\textbf{W}_2\sigma(\textbf{W}_1x),
\end{aligned}
\end{equation}
where $g(\theta,x)$ is a deep neural network with ReLU activation function. The form of $f_1$ is similar to the ``pre-activation" Resnet in \cite{he2016identity}. The structure of these two networks are showed in Figure 3.
\begin{figure}\label{ff3}
\centering
\subfigure{
\begin{minipage}[c]{0.5\linewidth}
\centering
\includegraphics[width=.5\textwidth]{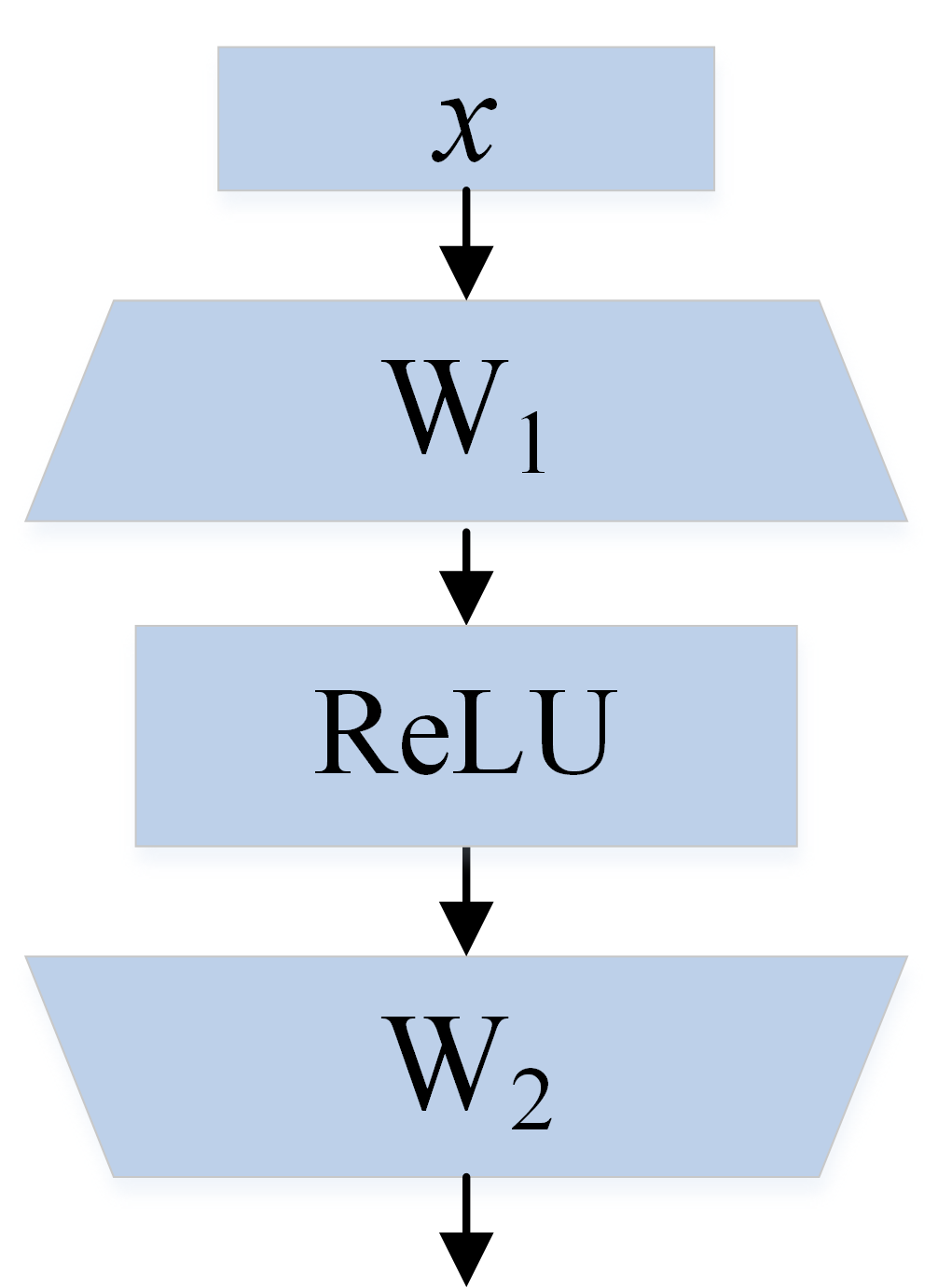}
\end{minipage}%
}%
\subfigure{
\begin{minipage}[c]{0.5\linewidth}
\centering
\includegraphics[width=.36\textwidth]{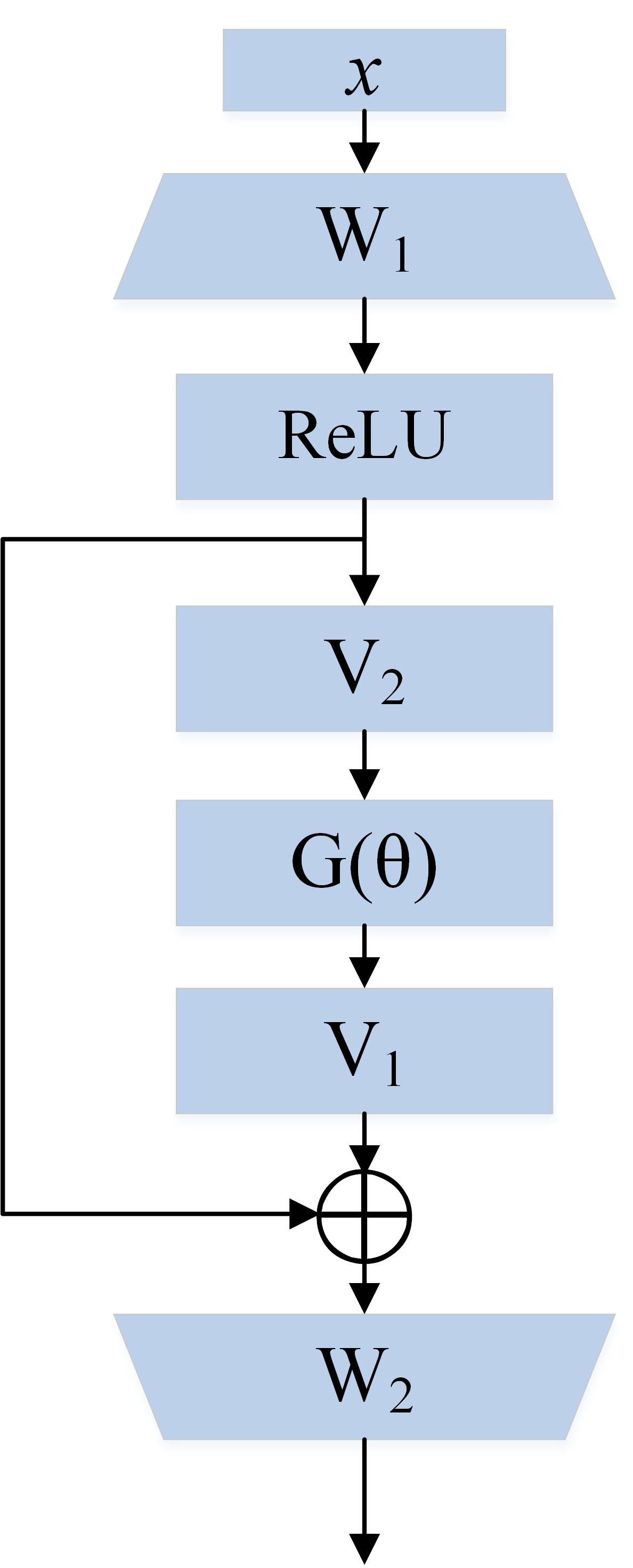}
\end{minipage}%
}%
\centering
\caption{Two-layer network and skip connection network}
\end{figure}

We summarize our main results in the following theorem:

\begin{theo}
(informal)Supposing $X,Y$ are bounded random variables, $L(\xi)$ is a convex function and $R(\xi)$ is the regularization term, and $f_1$ is defined as in (\ref{ma}), $\textbf{W}_1\in \mathbb{R}^{m\times n}$, $\textbf{W}_2\in \mathbb{R}^{d_Y\times m}$, $0<\eta<1$, $l\leq m^{\eta}$, $||\textbf{W}||_0$ is the number of non-zero column vectors in $\textbf{W}$,

\begin{equation}\label{error}
e(l)=\inf_{||\textbf{W}_{2}||_0\leq l,||\textbf{W}_{1,i}||_2= 1} \mathbb{E}\ L(\textbf{W}_2\sigma(\textbf{W}_1x),y) +\kappa\ ||\textbf{W}_2||_1,
\end{equation}

\begin{equation}
F_1(\xi)=\mathbb{E}\ L(f_1(\xi),y)+\kappa R(\xi),
\end{equation}
for every strict local minimum $F^*$ of $F_1$  with $F^*\geq e(l)$, the depth of it is at most $O(m^{\frac{\eta-1}{n}})$.
\end{theo}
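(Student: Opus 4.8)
The plan is to bound the depth by exhibiting, starting from the strict local minimum $\xi^*$ with $F_1(\xi^*)=F^*\geq e(l)$, an explicit continuous path in parameter space whose endpoint has a strictly smaller loss and along which $F_1$ never exceeds $F^*+O(m^{\frac{\eta-1}{n}})$. Since $F^*\geq e(l)$, the definition of $e(l)$ in (\ref{error}) supplies a two-layer network $\textbf{W}_2^{0}\sigma(\textbf{W}_1^{0}x)$ using at most $l\leq m^{\eta}$ non-zero columns whose value is arbitrarily close to $e(l)$. I would take this sparse network, embedded into the skip-connection architecture by setting $\textbf{V}_1=0$ so that $f_1$ collapses to $f_2$, as the endpoint of the path. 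The whole argument then reduces to connecting $\xi^*$ to this sparse two-layer configuration with a controlled climb, and I would split the path into two segments.

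The second (and easier) segment connects the pure two-layer network $\textbf{W}_2^{*}\sigma(\textbf{W}_1^{*}x)$ obtained from $\xi^*$ by deactivating the residual block to the optimal sparse one, entirely inside the slice $\{\textbf{V}_1=0\}$. Here I would invoke the two-layer connectivity / energy-gap estimate established earlier: because there are $m$ hidden units available while the target uses only $l\leq m^{\eta}$ of them, the remaining $m-l$ units give enough redundancy to interpolate between any two two-layer configurations below a common level while staying below it up to an additive gap. The exponent $\frac{\eta-1}{n}$ is precisely the resolution of this interpolation: covering the direction sphere of the normalized rows $\textbf{W}_{1,i}$ by $m$ points has mesh $\approx m^{-1/n}$, and combining this with the sparsity budget $l\leq m^{\eta}$ yields a gap of order $(l/m)^{1/n}=O(m^{\frac{\eta-1}{n}})$.

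The hard part is the first segment, namely deactivating the residual block $\textbf{W}_2\textbf{V}_1 g(\theta,\textbf{V}_2\sigma(\textbf{W}_1x))$ without paying more than $O(m^{\frac{\eta-1}{n}})$. Naively scaling $\textbf{V}_1\to 0$ discards the nonlinear contribution and can raise $F_1$ by an uncontrolled amount. Instead I would keep the realized function $f_1$ essentially fixed during this segment by simultaneously absorbing the vanishing residual output into the linear branch: as $\textbf{V}_1$ is scaled down, the free $m-l$ hidden units are used to approximate the bounded map $x\mapsto \textbf{W}_2\textbf{V}_1 g(\theta,\textbf{V}_2\sigma(\textbf{W}_1x))$ by a sum of ReLU units. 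Since $X,Y$ are bounded and $L$ is convex, hence locally Lipschitz on the relevant range, the induced change in $F_1$ is controlled by the $L^2(\mathrm{law}\,X)$ approximation error, and the classical rate for approximating a bounded function on a compact subset of $\mathbb{R}^n$ by $m$ ReLU units is again $O(m^{-1/n})$, which under the budget $l\leq m^{\eta}$ reproduces the bound $O(m^{\frac{\eta-1}{n}})$. This quantitative absorption of the deep residual output into the shallow branch is where I expect the real difficulty to lie; matching the two regularizers, $\kappa\,R(\xi)$ along the path versus $\kappa\,\|\textbf{W}_2\|_1$ in (\ref{error}), is by contrast a routine bookkeeping step using that on $\{\textbf{V}_1=0\}$ the regularizer reduces to, or is dominated by, the $L_1$ term.

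Concatenating the two segments, the total climb above $F^*$ is $O(m^{\frac{\eta-1}{n}})$ while the endpoint attains the value $e(l)\leq F^*$. Hence the connected component of the sub-level set $\{F_1\leq F^*+O(m^{\frac{\eta-1}{n}})\}$ containing $\xi^*$ meets the $e(l)$-level; since $\xi^*$ is a \emph{strict} local minimum it is isolated within its own level set, so the path reaches a strictly lower value and $\xi^*$ is escaped with a climb of only $O(m^{\frac{\eta-1}{n}})$, which is exactly the claimed depth bound.
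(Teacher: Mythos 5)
You have correctly isolated the hard step (switching off the residual branch), and your second segment matches the paper's use of the Freeman--Bruna clustering machinery; but the mechanism you propose for the first segment has a genuine gap and is not the one the paper uses. You want to absorb $x\mapsto \textbf{W}_2\textbf{V}_1 g(\theta,\textbf{V}_2\sigma(\textbf{W}_1x))$ into the shallow branch by re-purposing $m-l$ ``free'' hidden units as a ReLU approximator of that map. Three problems. First, at $\xi^*$ no units are free: freeing them requires the sparsification step, which in your ordering comes \emph{after} deactivation, and repositioning rows of $\textbf{W}_1$ to approximate the residual output simultaneously perturbs $\textbf{V}_2\sigma(\textbf{W}_1x)$, i.e.\ the input of $g$, so the target of your approximation moves as you build the approximant. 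Second, the arithmetic does not close: $m$ ReLU units approximating a Lipschitz function give $O(m^{-1/n})$ and $l=m^{\eta}$ units give $O(m^{-\eta/n})$; neither is $O(m^{\frac{\eta-1}{n}})$. That exponent is not an approximation rate at all --- it is the radius of a spherical cap guaranteed by pigeonhole to contain $m^{\eta}$ of the $m$ unit rows of $\textbf{W}_1$ (Lemma \ref{lll}), which is what lets $m^{\eta}$ columns of $\textbf{W}_2$ \emph{and} $\textbf{V}_2$ be collapsed onto one representative column at cost $O(m^{\frac{\eta-1}{n}})$ via Lemma \ref{lll2}. Third, even granting an approximation rate, you would need it realized by a \emph{continuous path} of parameters along which the loss stays controlled, which is an additional unproved claim.

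The paper deactivates the residual branch by a different and much cheaper device. During sparsification it sets $\textbf{V}_1(t)=\textbf{W}_2^+(t)\textbf{W}_2(0)\textbf{V}_1(0)$, so the product $\textbf{W}_2(t)\textbf{V}_1(t)$ --- hence the residual branch's contribution to the output --- is held exactly constant while $\textbf{W}_2$ and $\textbf{V}_2$ are sparsified; the only leakage is the $O(\epsilon_{m,\eta})$ perturbation of $g$'s input, controlled by the Lipschitz constant $G_0$ obtained in Step 1 from the regularizer. Then, once $l$ columns are freed and loaded with the optimal $l$-term two-layer network, it uses the fact that the loss is jointly convex in the output-layer parameters $(\textbf{W}_2,\textbf{W}_2\textbf{V}_1)$ (the same argument as Lemma \ref{l2} in the linear warm-up) to interpolate from $(\textbf{W}_{2,s},\textbf{W}_{2,s}\textbf{V}_{1,s})$ to $(\textbf{W}_2^*,\textbf{0})$ with loss bounded by the endpoints, i.e.\ by $\max(\lambda,e(l))$. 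No function-theoretic approximation of $g$'s output is ever needed, and replacing that convexity step by your absorption argument would require proving a quantitative, bias-free, path-connected ReLU approximation result for a moving target --- considerably harder than what the paper actually does.
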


This result shows that for a suitable loss function, although $f_1$ is a multi-layer nonlinear network, by virtue of the skip connection, roughly all the local minima worse than the global minimum of the two-layer network  $f_2$ are very ``shallow". The depths of these local minima are  controlled by $\epsilon=O(m^{\frac{\eta-1}{n}})$, so that if $m$ is very large, there is almost no bad strict local minima worse than $e(l)$. From the well known  universal approximation theorem, the expressive power of the two-layer network with ReLU is very strong(this can be easily poved using Hahn-Banach theorem), such that $||\textbf{W}^{*}_2\sigma(\textbf{W}^{*}_1X)-Y||^2\to 0$ as $l\to \infty$ for any functon $Y=f(X)$ under very mild conditions. So this result in fact describes the depth of nearly all the local minima if $m$ is very large.

\section{Related Work}
The global geometry of the deep neural network loss surfaces has been widely concerned for a long time. The characteristics of deep learning are high dimension and non-convex, which make the model hard to be analyzed. The loss surface for deep linear networks was firstly studied in \cite{Kawaguchi2016Deep}. It is shown that all the local minima are global, and all the saddle points are strict. Although the expressive power of the deep linear network is the same as the single-layer one, the loss of the deep linear network is still non-convex. This work pointed out that the linear product of matrices will generally only create saddle points rather than bad local minima. The first rigorous and positive works on non-linear networks are in \cite{tian2017symmetry-breaking} and \cite{du2018when}. In these works, it is shown that, for a  single-hidden-node ReLU network, under a very mild assumption on the input distribution, the loss is one point convex in a very large area. However, for the networks with multi-hidden nodes, the authors in \cite{safran2018spurious} pointed out that spurious local minima are common and indicated that an over-parameterization (the number of hidden nodes should be large) assumption is necessary. The loss surface of the two-layer over-parameterized network was studied in \cite{du2018on} and \cite{MahdiTheoretical}. They showed that the over-parameterization helps two-layer networks to eliminate all the bad local minima, yet their methods required the unrealistic quadratic  activation function (By using Hahn-Banach theorem, it is easy to show there are some functions which cannot be approximated by such networks). A different way to consider the landscape is in  \cite{freeman2016topology}, which studied neural networks with ReLU and required the number of hidden nodes increases exponentially with the dimension of the input. They showed that if the number of the hidden nodes is large enough, the sub-level sets of the loss will be nearly connected.

The loss landscape of nonlinear skip connection networks was studied in \cite{shamir2018are,kawaguchi2018depth,yun2019are}. In these papers, it is shown that all the local minima created by the nonlinearity in the residual layer will never be worse than the linear model, yet their methods heavily rely on the convexity of $l(\textbf{W}x)$  hence very hard to be generalized to more general residual networks. In our work, we focus on more realistic network structure $f=\textbf{W}_2[\sigma(\textbf{W}_1 x)+\textbf{V}_1 g(\theta,\textbf{V}_2\sigma(\textbf{W}_1 x))]$, and give a similar result as \cite{shamir2018are} and extend the work on skip connection network \cite{shamir2018are,kawaguchi2018depth} to more general non-linear cases. The structure we study is similar to the work in \cite{allenzhu2019what}, but we focus on the global geometry of the loss rather than the gradient descent behaviors near neural tangent kernels area. The techniques we use are closely related with the theorems in \cite{freeman2016topology} and our results are much stronger to apply to arbitrarily multilayered residual units with ReLU activation function and a large class of convex functions.

\section{Preliminaries: Connectedness of Sub-level Sets and the Depth of Local Minima}\label{s3}
The loss surface of the model is closely related to the solvability of the optimization problem, and the sub-level set method is a very important tool to study the loss landscape. We consider the risk minimization of the loss:
\begin{equation}
F(\xi)=\mathbb{E}_{X,Y\sim P}\ L(f(X,\xi),Y)+\kappa R(\xi).
\end{equation}
The sub-level set of $F(\xi)$ is defined as:
\begin{equation}
\Omega_F(\lambda)=\{\xi; F(\xi)\leq \lambda\}.
\end{equation}
In the case that $F$ is a convex function, we know that for any $\xi_A,\xi_B$, if $\xi(t)=(1-t)\xi_A+t\xi_B$, we have $F(\xi(t))\leq \max (F(\xi_A),F(\xi_B))$, so the sub-level sets for all $\lambda$ are connected. If $F$ is a function such that all the local minima are global(not need to be convex), for any $\xi_A,\xi_B$ we can find a continuous path $\xi_1(t),\xi_2(t)$ with $F(\xi_1(t)),F(\xi_2(t))$ decreasing, then  $\xi_1(0)=\xi_A$, $\xi_1(1)=\xi^*$, $\xi_2(0)=\xi_B$, $\xi_2(1)=\xi^*$, so that we can produce a path $\xi(t)$ with $F(\xi(t))\leq \max (F(\xi_A),F(\xi_B))$ by splicing the two paths together. Conversely, if the sub-level sets are connected, we can get some information about the strict local minima of the loss function:

\begin{theo}\label{t2}(Proposition 2.1\cite{freeman2016topology})
Supposing  $\Omega_F(\lambda)$ is connected for all $\lambda$, any strict local minimum $\xi^*$, i.e. satisfying that there is a small disk $D=dist(\xi^*,\cdot)\leq \epsilon$ such that for all all the points $\xi'\in D$, $F(\xi')> F(\xi^*)$, is a global minimum.
\end{theo}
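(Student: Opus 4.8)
The plan is to argue by contradiction, using the connectedness of the sub-level sets to rule out the possibility that the strict local minimum $\xi^*$ sits strictly above the global infimum of $F$. Suppose $\xi^*$ were not a global minimum. Then there is a parameter $\xi_B$ with $F(\xi_B)<F(\xi^*)$. Setting the threshold $\lambda=F(\xi^*)$, both $\xi^*$ and $\xi_B$ lie in the sub-level set $\Omega_F(\lambda)$, which by hypothesis is connected. I will extract a contradiction by showing that $\xi^*$ is an \emph{isolated} point of $\Omega_F(\lambda)$, while $\Omega_F(\lambda)$ also contains the distinct point $\xi_B$; a connected set cannot have an isolated point unless it is a singleton.

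The key step is to translate the strict-local-minimum condition into an isolation statement. By definition there is a radius $\epsilon>0$ with $F(\xi')>F(\xi^*)=\lambda$ for every $\xi'$ in the punctured disk $0<\mathrm{dist}(\xi^*,\xi')\leq\epsilon$. Hence no such $\xi'$ can belong to $\Omega_F(\lambda)$, and the intersection of the open ball $B(\xi^*,\epsilon)$ with $\Omega_F(\lambda)$ is exactly $\{\xi^*\}$. This shows $\{\xi^*\}$ is open in the subspace topology of $\Omega_F(\lambda)$; since the parameter space is a metric (hence Hausdorff) space, the singleton $\{\xi^*\}$ is also closed. Thus $\{\xi^*\}$ is a clopen, proper, nonempty subset of $\Omega_F(\lambda)$ (proper because $\xi_B\in\Omega_F(\lambda)\setminus\{\xi^*\}$), which exhibits the separation $\Omega_F(\lambda)=\{\xi^*\}\cup(\Omega_F(\lambda)\setminus\{\xi^*\})$ into two disjoint nonempty relatively open pieces. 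This contradicts the assumed connectedness of $\Omega_F(\lambda)$, so $\xi^*$ must in fact be a global minimum.

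I expect the only delicate point to be the careful bookkeeping of the punctured versus the closed disk: the hypothesis should be read on the punctured disk (excluding $\xi^*$ itself, where equality holds), and one must confirm that the \emph{strictness} of the inequality is exactly what forces the intersection $B(\xi^*,\epsilon)\cap\Omega_F(\lambda)$ down to the single point $\xi^*$ rather than a fatter neighborhood. If the local minimum were merely non-strict, nearby points with $F(\xi')=\lambda$ could also lie in $\Omega_F(\lambda)$ and the isolation argument would collapse; so strictness is the hinge of the whole proof. No smoothness or metric structure beyond the Hausdorff property of the ambient space is required, and the argument runs verbatim whether ``connected'' is interpreted as topological connectedness or as path-connectedness.
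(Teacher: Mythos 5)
Your proof is correct. Note that the paper itself gives no proof of this statement---it is quoted verbatim as Proposition~2.1 of \cite{freeman2016topology}---so the only comparison available is with the path-based reasoning the paper uses for the neighboring results (Theorems~\ref{t3} and~\ref{gap}), where connectedness is exploited by constructing a decreasing path out of the local minimum and observing that any such path must first traverse the punctured disk on which $F>F(\xi^*)$. Your clopen-separation argument reaches the same contradiction more abstractly: the strictness of the local minimum makes $B(\xi^*,\epsilon)\cap\Omega_F(F(\xi^*))=\{\xi^*\}$, so $\{\xi^*\}$ is a nonempty proper clopen subset of a connected set once a strictly better point $\xi_B$ is adjoined. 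This buys you a little generality---it needs only topological connectedness rather than path-connectedness, and no structure on the parameter space beyond it being $T_1$ (so that singletons are closed)---whereas the path formulation is what the rest of the paper actually needs, since Theorems~\ref{t3} and~\ref{gap} quantify the ``depth'' along explicit paths. You are also right that the disk in the statement must be read as punctured (the inequality as written cannot hold at $\xi^*$ itself) and that strictness is exactly what isolates $\xi^*$; with a non-strict minimum the sub-level set could contain a flat neighborhood and the theorem fails, as the paper's Figure~\ref{loss} illustrates.
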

Note that this theorem cannot exclude the existence of bad non-strict local minima. Figure \ref{loss} is an example that all the sub-level sets are connected, but bad non-strict local minima exist.
\begin{figure}
  \centering
  \includegraphics[width=.4\textwidth]{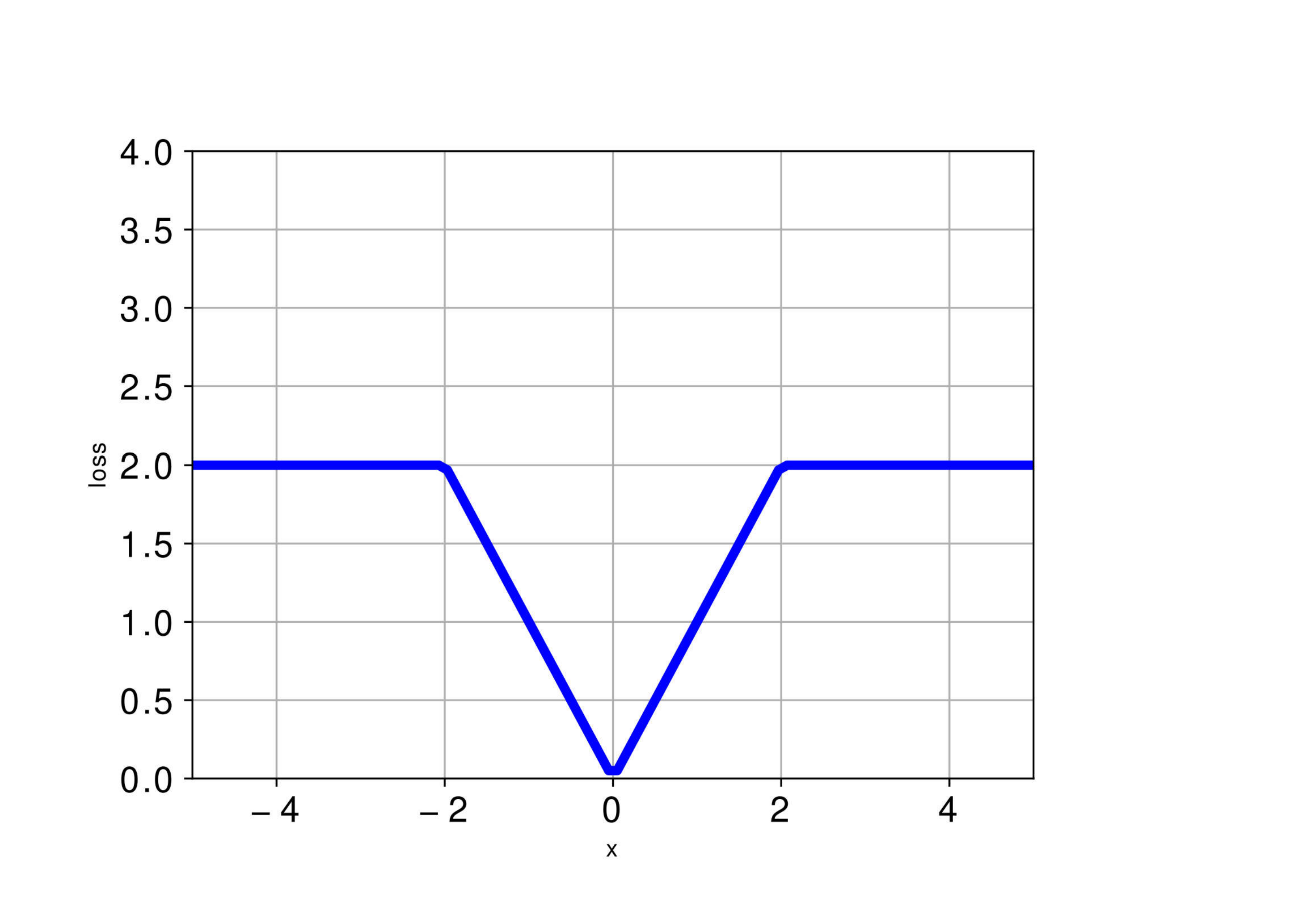}
  \caption{Loss function with non-strict local minima}
  \label{loss}
\end{figure}

In the case not all the sub-level sets are connected, sub-level sets still help us to understand the local minima. In fact we have:
\begin{theo}\label{t3}
Supposing  $\Omega_F(\lambda)$ is connected for all $\lambda \geq \lambda_0$, all the strict local minima $\xi^*$ satisfy $F(\xi^*)\leq \lambda_0$
\end{theo}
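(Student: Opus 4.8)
The plan is to prove Theorem~\ref{t3} by contradiction, leveraging the definition of a strict local minimum together with the connectedness hypothesis on the high sub-level sets. Suppose, for contradiction, that $\xi^*$ is a strict local minimum with $F(\xi^*) = \lambda^* > \lambda_0$. By the definition of strict local minimum borrowed from Theorem~\ref{t2}, there is a small disk $D = \{\xi : \mathrm{dist}(\xi^*, \xi) \leq \epsilon\}$ on which $F(\xi') > F(\xi^*)$ for every $\xi' \in D \setminus \{\xi^*\}$. The key observation is that $\xi^*$ is then an isolated point of the sub-level set $\Omega_F(\lambda^*)$: within the disk $D$, the only point satisfying $F \leq \lambda^*$ is $\xi^*$ itself.

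Next I would exploit connectedness. Since $\lambda^* > \lambda_0$, the hypothesis guarantees that $\Omega_F(\lambda^*)$ is connected. A connected set cannot have an isolated point unless it consists of that single point alone. So either $\Omega_F(\lambda^*) = \{\xi^*\}$, or we reach a contradiction. To rule out the degenerate singleton case, I would argue that a loss of the form $F(\xi) = \mathbb{E}\,L(f(X,\xi),Y) + \kappa R(\xi)$ on a parameter space of positive dimension cannot have its sub-level set reduce to a single point at a value strictly above the infimum; more carefully, one shows that along a path leaving $\xi^*$ the function must eventually return to values $\leq \lambda^*$ elsewhere if the set is connected and nontrivial, contradicting isolation. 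The clean formulation is: an isolated point of a topological space that is connected and contains more than one point is impossible, because $\{\xi^*\}$ would be both open (by the disk $D$) and closed in $\Omega_F(\lambda^*)$, splitting it into two nonempty clopen pieces.

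Concretely, I would set $A = \{\xi^*\}$ and $B = \Omega_F(\lambda^*) \setminus \{\xi^*\}$. The strict-minimum condition makes $A$ relatively open in $\Omega_F(\lambda^*)$ (the open disk of radius $\epsilon$ intersects the sub-level set only in $\xi^*$), and $A$ is trivially relatively closed as a single point in a metric space. Hence if $B$ is nonempty, $\{A, B\}$ is a separation, contradicting connectedness of $\Omega_F(\lambda^*)$. Therefore $B = \emptyset$ and $\Omega_F(\lambda^*) = \{\xi^*\}$. But $\Omega_F(\lambda)$ is monotone increasing in $\lambda$ and nonempty for all $\lambda \geq \lambda_0$, and for any $\lambda$ with $\lambda_0 \leq \lambda < \lambda^*$ we have $\Omega_F(\lambda) \subseteq \Omega_F(\lambda^*) = \{\xi^*\}$, forcing $F(\xi^*) \leq \lambda < \lambda^*$, which contradicts $F(\xi^*) = \lambda^*$. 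This contradiction establishes $F(\xi^*) \leq \lambda_0$.

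The main obstacle I anticipate is handling the borderline/singleton case rigorously: the purely topological clopen argument delivers $\Omega_F(\lambda^*)=\{\xi^*\}$ cleanly, but converting that into the final inequality requires the monotonicity of sub-level sets together with the fact that sub-level sets at parameters just below $\lambda^*$ remain connected and nonempty, which is exactly where the hypothesis ``$\Omega_F(\lambda)$ connected for all $\lambda \geq \lambda_0$'' is used in full strength. I would take care that the argument does not secretly assume $F$ is continuous in a way that is unwarranted, although continuity of $F$ in $\xi$ (inherited from continuity of $L$, $R$, and the network map) makes the sub-level sets closed and the disk argument straightforward; I would state this continuity as the one standing regularity assumption and verify it is consistent with the setup in Section~\ref{s3}.
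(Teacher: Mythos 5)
Your proof is correct, but it takes a genuinely different route from the paper's. The paper argues constructively: starting from an arbitrary point it builds a monotonically decreasing path down to level $\lambda_0$, extending the path piece by piece using connectedness of $\Omega_F(F(\xi(t_0)))$ at each stage; the existence of such a descent path out of any point with value above $\lambda_0$ then rules out strict local minima there. You instead give a purely topological separation argument: a strict local minimum $\xi^*$ with $F(\xi^*)=\lambda^*>\lambda_0$ is an isolated point of $\Omega_F(\lambda^*)$, hence $\{\xi^*\}$ is clopen in that set, so connectedness forces $\Omega_F(\lambda^*)=\{\xi^*\}$, which you then exclude via monotonicity of the sub-level sets. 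Your version is more elementary and, frankly, more rigorous --- the paper's ``this process can be extended continuously in this way'' is exactly the kind of continuous-induction step that needs care, whereas your clopen argument is airtight; it also only needs topological connectedness, which is implied by the path-connectedness the paper implicitly works with, so the hypotheses are compatible. The one point to watch is your final step: to convert $\Omega_F(\lambda^*)=\{\xi^*\}$ into a contradiction you need $\Omega_F(\lambda)$ to be nonempty for some $\lambda_0\le\lambda<\lambda^*$, i.e.\ $\inf F\le\lambda_0$. You flag this yourself, and it is really an implicit assumption of the theorem statement rather than a defect of your argument (otherwise a strict global minimum attained at a value above $\lambda_0$, e.g.\ $F(\xi)=\|\xi\|^2+1$ with $\lambda_0=0$, is a literal counterexample), so stating it as a standing hypothesis is the right fix. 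What the paper's approach buys in exchange for its informality is an explicit descent path, which is the object reused in Theorem~\ref{gap} to quantify the depth of local minima; your argument establishes the stated inequality but does not produce that path.
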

The theorem can be proved by showing that there is a decreasing path from any $\xi_A$ to $\xi^*$ where $F(\xi^*)=\lambda_0$. From the condition of this theorem, there is a continuous path $\xi(t)$ from any $\xi_A$ to $\xi^*$ with $F(\xi(t))\leq F(\xi_A)$ and $F(\xi^*)=\lambda_0$. Supposing there is a $t_0>0$ such that on the path $F(\xi(t))$, the part $0\leq t\leq t_0$ is decreasing, due to the fact  $\Omega_F(F(\xi(t_0)))$ is connected, there must be a new path $\xi_1(t)$ such that $F(\xi_1(t)), 0\leq t\leq t_0+\epsilon_0$ is decreasing, and this process can be extended continuously in this way.

There is also a theorem about the depth of local minima:
\begin{theo}\label{gap}
Suppose for any $\xi_A,\xi_B \in \Omega_F(\lambda)$, there is a smallest constant $\epsilon>0$ (the depth)  and a continuous path  $\xi(t)$ connecting $\xi_A$ to $\xi_B$ such that $F(\xi(t))\leq \lambda +\epsilon$. Then for any strict local minimum $\xi^*$ , we have a value $\lambda_1$ and a continuous path $\xi_1(t)$ such that $\xi_1(t)$ connects it to a point $\xi'$, with $\xi'$ not belonging to the same connected component as $\xi^*$ in $\Omega_F(\lambda_1)$ and $\max _t (F(\xi_1(t))-F(\xi^*))= \epsilon$.
\end{theo}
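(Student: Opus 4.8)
The plan is to reduce the statement to a single application of the connectivity hypothesis at the level $\lambda_1=\lambda_0:=F(\xi^*)$, after first showing that a strict local minimum occupies its own connected component of $\Omega_F(\lambda_0)$. The whole point of the strictness assumption is that it turns $\xi^*$ into an isolated point of its own sub-level set, and once that is in hand the hypothesis supplies the escaping path essentially for free.

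First I would use strictness to localize $\xi^*$. By the definition used in Theorem~\ref{t2} there is a disk $D$ with $F(\xi')>F(\xi^*)=\lambda_0$ for every $\xi'\in D\setminus\{\xi^*\}$. Hence the only point of $\Omega_F(\lambda_0)$ lying in $D$ is $\xi^*$ itself, so a continuous path inside $\Omega_F(\lambda_0)$ that starts at $\xi^*$ cannot leave $D$: it would have to cross $\partial D$, where $F>\lambda_0$. Therefore the connected component of $\xi^*$ in $\Omega_F(\lambda_0)$ is exactly $\{\xi^*\}$, and in particular every other point of $\Omega_F(\lambda_0)$ lies in a different component.

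Next, assuming $\xi^*$ is not the unique global minimizer (the only regime in which escaping it is meaningful), I would choose any $\xi'\in\Omega_F(\lambda_0)$ with $\xi'\neq\xi^*$; by the previous step $\xi'$ automatically sits in a different component of $\Omega_F(\lambda_1)$ for $\lambda_1=\lambda_0$. Applying the hypothesis to the pair $\xi^*,\xi'\in\Omega_F(\lambda_0)$ yields a smallest $\epsilon>0$ and a continuous path $\xi_1(t)$ from $\xi^*$ to $\xi'$ with $F(\xi_1(t))\leq\lambda_0+\epsilon$, so $\max_t\bigl(F(\xi_1(t))-F(\xi^*)\bigr)\leq\epsilon$. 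The equality $\max_t\bigl(F(\xi_1(t))-F(\xi^*)\bigr)=\epsilon$ then follows from minimality of $\epsilon$: if the optimal path attained a strictly smaller maximum $\lambda_0+\epsilon'$ with $\epsilon'<\epsilon$, that smaller $\epsilon'$ would already connect the pair, contradicting that $\epsilon$ is the smallest admissible constant. This exhibits $\lambda_1$, $\xi'$ and $\xi_1(t)$ with exactly the required properties.

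The main obstacle is the equality rather than the mere bound $\leq\epsilon$: it rests on the minimal barrier height between $\xi^*$ and $\xi'$ being \emph{attained} by an actual path, which needs a compactness or lower-semicontinuity argument (of mountain-pass flavor) to pass from the infimal level to a realizing path; without attainment one gets the equality only in the limit and should phrase the conclusion with an infimum over paths. A secondary point is to state the isolation step of paragraph two carefully, so that ``strict local minimum'' genuinely forces the component $\{\xi^*\}$ for continuous $F$, and to dispose of the degenerate case $\Omega_F(\lambda_0)=\{\xi^*\}$ (global minimizer) either by exclusion or by raising $\lambda_1$ slightly above $\lambda_0$ until a second component appears.
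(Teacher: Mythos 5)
Your proof is correct in substance but reaches the conclusion by a different and more direct route than the paper. The paper's (one-sentence) argument invokes the machinery of Theorem~\ref{t3}: starting from an arbitrary point one builds a decreasing path toward $\xi^*$, extending it for as long as the relevant sub-level sets stay connected, and identifies $\lambda_1$ as the level at which the component of $\xi^*$ splits off; the barrier of height $\epsilon$ is then supplied by the hypothesis at that level. You instead pin down $\lambda_1=F(\xi^*)$ immediately: strictness makes $\{\xi^*\}$ clopen in $\Omega_F(F(\xi^*))$ (your crossing-$\partial D$ argument shows this for path components; the clopen observation upgrades it to connected components, which is what the statement asks for), so any other point of that sub-level set lies in a different component, and a single application of the hypothesis at $\lambda=F(\xi^*)$ produces the path with $F(\xi_1(t))\leq F(\xi^*)+\epsilon$. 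This buys a cleaner identification of $\lambda_1$ and in fact a sharper bound, since the paper's construction only yields a path below $\lambda_1+\epsilon$ with $\lambda_1\geq F(\xi^*)$ a priori. Your closing caveat about the claimed equality $\max_t(F(\xi_1(t))-F(\xi^*))=\epsilon$ is well taken: for a fixed pair only the inequality $\leq\epsilon$ follows from the hypothesis, and the paper's own argument does not establish attainment either, so this is a defect of the theorem's phrasing rather than a gap in your proof. Your handling of the degenerate case (unique global minimizer) is likewise a point the paper glosses over.
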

This theorem can be proved by directly constructing such a path as in Theorem \ref{t3} from the conditions of this theorem. It is easy to see that if there is sub-level set $\Omega_F(\lambda_1)$ such that $\theta_A,\theta_B$ are not in the same connected component, there is no decreasing path connecting the two points, so that $\epsilon$ is the depth, which measures the difficulty to jump out a local minimum.

\begin{rem}\label{mr}
As in \cite{freeman2016topology}, the sub-level set is defined to be a closed, thus compact set. Under this definition, completely flat areas (c.f. Figure \ref{loss} ) will not influence the connectedness of such sub-level sets. And when we consider the connectedness of loss sub-level sets, it is sufficient to construct a path outside a zero-measure set. In fact, suppose there is a zero-measure set $S$. By adding small perturbation, there is a continuous path $\gamma(t)$ with $\gamma(t) \in \Theta\setminus S\ $ for almost all $t$, where $\Theta \setminus S$ is the parameter space outside $S$. We suppose $ F(\gamma(t))\leq \lambda +\epsilon$ for almost all $t$ and $\epsilon$ can be arbitrarily small. Due to the continuity of the $F$ and $\bigcap_i (-\infty,\lambda+\epsilon_i]= (-\infty,\lambda]$ where $\epsilon_i$ is a monotone decreasing sequence to 0, we have $F(\gamma(t))\leq \lambda$ for all $t$ and $\gamma(t)$ connects $\xi_A$ and $\xi_B$.
\end{rem}

In the two-layer ReLU network case, the depth of the local minima is given in \cite{freeman2016topology}:

\begin{theo}\label{pt}(Theorem 2.4 in \cite{freeman2016topology})
Consider the loss function $F(\bm{W}_1,\bm{W}_2) =\mathbb{E}_{X,Y\sim P} |Y-\bm{W}_2\sigma(\bm{W}_1X)|^2$, where $X\in \mathbb{R}^{n}, Y\in \mathbb{R}, \bm{W}_1\in \mathbb{R}^{m\times n}, \bm{W}_2\in \mathbb{R}^{1\times m}$, and $\sigma$ is the ReLU activation function. For any $\xi_A,\xi_B \in \Omega_F(\lambda)$, there is a continuous path  $\xi(t)$ connecting $\xi_A$ and $\xi_B$ with $F(\xi(t))\leq \max(\lambda,\epsilon)+O(\alpha)$, where
$$\epsilon=\inf_{l.\alpha} \max (e(l),\delta_{W_1}(m,0,m),\delta_{W_1}(m-l,\alpha,m)),$$ $l=m^{\eta},\alpha=m^{\frac{\eta-1}{n}}, \eta<1$, $e(l)$ is the minimum approximation error using $l$ hidden nodes, $\delta_{W_1}(m-l,\alpha,m)\sim O(\alpha)$, $\delta_{W_1}(m,0,m)\leq \lambda$.
\end{theo}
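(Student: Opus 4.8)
The plan is to build the connecting path through a common intermediate configuration that plays the role of a rich ``dictionary'' of ReLU features covering the sphere, exploiting that the loss is convex as a function of the second-layer weights alone (since $L$ here is the squared error and $\mathbf{W}_2 \mapsto \mathbf{W}_2\sigma(\mathbf{W}_1 X)$ is linear for fixed $\mathbf{W}_1$). I would write each endpoint as $\xi = (\mathbf{W}_1,\mathbf{W}_2)$ and regard the normalized rows $\theta_i = \mathbf{W}_{1,i}/\|\mathbf{W}_{1,i}\|$ as points on $S^{n-1}$, folding the norm into $\mathbf{W}_2$. The reduction is to connect each of $\xi_A,\xi_B$ to a single reference configuration $\xi^*$; then by symmetry the concatenation $\xi_A \to \xi^* \to \xi_B$ gives the required path, and it suffices to control the loss on each half.

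First I would \emph{sparsify}. Since the best $l$-term ReLU approximation of the target achieves error $e(l)$, I would construct a continuous deformation of $\xi_A$ that keeps $l$ active units representing essentially the same function while driving the output weights of the remaining $m-l$ units to zero, ``parking'' them. Because the starting loss is at most $\lambda$ and the sparse target costs at most $e(l)$, this stage stays below $\max(\lambda,e(l))$; this is exactly what the terms $\delta_{\mathbf{W}_1}(m,0,m)\le\lambda$ and $e(l)$ account for. With their output weights zero, the parked units carry no signal, so their first-layer directions can be moved at no cost to the loss, and I would next spread these $m-l$ directions until they form an $\alpha$-covering of $S^{n-1}$, i.e.\ every direction lies within angle $\alpha$ of some parked $\theta_i$.

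The decisive step is then to \emph{re-express} the target using the covering dictionary. Since $X$ is bounded and $\sigma$ is $1$-Lipschitz, the feature $\sigma(\langle\theta,x\rangle)$ is Lipschitz in $\theta$, so directions within angle $\alpha$ produce features differing by $O(\alpha)$. Hence any function realized as an $l$-term sum (or, more generally, an integral over $S^{n-1}$ of bounded total variation) can be matched by a finite combination supported on the covering directions up to an error of order $\alpha$ times the total coefficient mass; this is precisely the content of $\delta_{\mathbf{W}_1}(m-l,\alpha,m)\sim O(\alpha)$. Turning on the parked output weights to realize this approximation, then driving the original $l$ active weights to their reference values, moves $\xi_A$ to $\xi^*$ with the loss staying within $O(\alpha)$ of $\max(\lambda,\epsilon)$, where the final interpolation of $\mathbf{W}_2$ is harmless by convexity in the second layer. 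Splicing with the mirror path from $\xi_B$ yields $F(\xi(t))\le \max(\lambda,\epsilon)+O(\alpha)$.

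The main obstacle is this covering/discretization step together with the accompanying counting. Achieving an $\alpha$-covering of $S^{n-1}$ requires on the order of $\alpha^{-(n-1)}$ directions, so one needs $m-l\gtrsim \alpha^{-(n-1)}$; with $l=m^{\eta}$ this forces $\alpha\sim m^{(\eta-1)/n}$, which is exactly how the input dimension $n$ and the exponent $\eta$ enter the rate. Controlling the total variation of the coefficient measure so the $O(\alpha)$ bound does not blow up, and verifying that \emph{every} intermediate configuration along the turn-on/turn-off deformations stays inside the enlarged sub-level set (not merely the endpoints), are the delicate points. The balance $\inf_{l,\alpha}\max\!\big(e(l),\delta_{\mathbf{W}_1}(m,0,m),\delta_{\mathbf{W}_1}(m-l,\alpha,m)\big)$ is precisely what optimizes the trade-off between the sparsity error $e(l)$, which decreases in $l$, and the covering error $O(\alpha)$, which grows as fewer units remain available to tile the sphere.
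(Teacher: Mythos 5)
The paper never actually proves Theorem \ref{pt} --- it is imported from \cite{freeman2016topology} --- but the same mechanism is reused in Steps 2--3 of the proof of Theorem \ref{mt} via Lemmas \ref{lll} and \ref{lll2}, and your construction runs the dictionary argument in the opposite direction from that mechanism. The paper's route is: by pigeonhole on an $\epsilon$-net of the sphere (Lemma \ref{lll}), among the $m$ \emph{existing} rows of $\textbf{W}_1$ there is a cluster $Q_m$ of at least $l=m^{\eta}$ rows pairwise within angle $2m^{(\eta-1)/n}$; collapsing this cluster onto a single representative (transferring its output weights $\alpha_{i_k}$ onto one column) perturbs the realized function by $O(\alpha)$ times the coefficient mass, by the angular Lipschitz bound of Lemma \ref{lll2} --- this is precisely the term $\delta_{W_1}(m-l,\alpha,m)\sim O(\alpha)$. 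This frees $l$ slots whose output weights are now zero; those first-layer rows are then moved for free to the directions of the best $l$-term approximation, and the final interpolation of $\textbf{W}_2$ is handled by convexity in the last layer, which is where $e(l)$ enters.

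Your proposal instead frees $m-l$ slots and spreads them into an $\alpha$-cover of $S^{n-1}$, and the gap is in the very first step: you cannot drive the output weights of $m-l$ of the $m$ units of $\xi_A$ to zero ``while keeping essentially the same function,'' and convexity in $\textbf{W}_2$ does not give the bound $\max(\lambda,e(l))$ there, because $e(l)$ is the best $l$-term error over \emph{all} first-layer directions while your interpolation endpoint is the best $l$-sparse readout over $\xi_A$'s \emph{own} features, which can be far worse. Only $l$ slots can be freed at cost $O(\alpha)$ (the pigeonhole yields a cluster of size $m^{\eta}$, not $m-m^{\eta}$), which is why the theorem's quantity is $\delta_{W_1}(m-l,\alpha,m)$ --- the error of re-expressing the $m$-unit first layer with $m-l$ perturbed units --- rather than a discretization error of an integral representation over a sphere cover. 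Your counting also does not yield the stated rate: an $\alpha$-cover of $S^{n-1}$ needs on the order of $\alpha^{-(n-1)}$ points, whereas $\alpha=m^{(\eta-1)/n}$ comes from the pigeonhole balance $m\cdot\alpha^{n}\geq m^{\eta}$. The other ingredients you identify (convexity in the second layer, controlling the total coefficient mass, verifying every intermediate configuration) are the right concerns, but the sparsification step as you set it up would fail.
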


In the two-layer ReLU case, this shows the depth of all the local minima worse than $e(l)$ is at most  $O(m^{-\frac{1}{n}})$.

\section{Warm up: One-layer Case}

In this section, we consider the loss landscape in the linear case:
\begin{equation}
f(\textbf{W},\textbf{V},\theta,x)=\textbf{W}(x+\textbf{V}g(\theta,x)),
\end{equation}
where $ x\in \mathbb{R}^{d_x}, \textbf{W}\in \mathbb{R}^{d_y\times d_x}, g(\theta,x)\in \mathbb{R}^{d_z}, \textbf{V}\in \mathbb{R}^{d_x\times d_z} $.
with loss
\begin{equation}
F(\textbf{W},\textbf{V},\theta)=\mathbb{E}_{x,y\sim P}\ L(f(\textbf{W},\textbf{V},\theta,x),y).
\end{equation}

In the case y is a scalar and $l$ is the MSE loss function, this has been studied in \cite{shamir2018are}. The result is improved in \cite{kawaguchi2018depth}. And under a weaker condition, we have a new theorem about the sub-level sets:
\begin{theo}\label{t1}
Supposing $w \to \mathbb{E}_{x,y} \ L(wx,y)$ is a function such that the sub-level sets are connected for all $w\in \mathbb{R}^{d_y\times {d_x+d_z}}$ and $\{x\in \mathbb{R}^{d_x+d_z},y\in \mathbb{R}^{d_y} \}$, consider the input $\{x\in \mathbb{R}^{d_x},y \}$ and two models:
\begin{equation}
f_1(\textbf{W})=\textbf{W}(x+\textbf{V}g(\theta,x)),
\end{equation}
\begin{equation}
f_2(\textbf{W})=\textbf{W}x.
\end{equation}
Let $F_1=\mathbb E_{x,y \sim P}\ L(f_1(x),y)$, $F_2=\mathbb E_{x,y\sim P}\ L(f_2(x),y)$.
Assuming $d_y\leq \min \{d_x,d_z\}$,
for any parameter $\theta_A, \theta_B$ and $\lambda \in \mathbb{R}$ with $F(\theta_{\{A,B\}})\leq \lambda$, there exists a continuous path $\gamma(t)$ such that $\gamma(0)=\theta_A$, $\gamma(1)=\theta_B$, and $$F(\gamma(t))\leq \max (\lambda,F^*_{w})$$
where
$$F^*_{w}= \inf_W \  F_2(W ).$$

\end{theo}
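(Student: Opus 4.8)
The plan is to exploit the skip connection to rewrite $f_1$ as a purely linear model on an augmented input, and then transport the augmented weight to the linear optimum using the connectedness hypothesis. Writing $f_1(\textbf{W})=\textbf{W}x+(\textbf{W}\textbf{V})\,g(\theta,x)$, I set $\hat{x}=(x,g(\theta,x))\in\mathbb{R}^{d_x+d_z}$ and $\hat{\textbf{W}}=[\,\textbf{W}\mid \textbf{W}\textbf{V}\,]\in\mathbb{R}^{d_y\times(d_x+d_z)}$, so that $f_1=\hat{\textbf{W}}\hat{x}$. For any fixed inner parameter $\theta$, the loss along the family $(\textbf{W},\textbf{V})$ is precisely $w\mapsto \mathbb{E}\,L(w\hat{x},y)$ evaluated at $w=\hat{\textbf{W}}$, and by hypothesis the sub-level sets of this map are connected for the distribution of $(\hat{x},y)$ on $\mathbb{R}^{d_x+d_z}\times\mathbb{R}^{d_y}$. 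This reformulation is the engine of the whole argument.

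First I would fix a full-row-rank $W^{*}$ with $F_2(W^{*})\le F^{*}_{w}+\epsilon$ (possible because the full-rank matrices are dense and $F_2$ is continuous, $L$ being convex), and build a path from $\theta_A$ to the canonical point $(W^{*},\textbf{0},\theta^{g}_A)$ in three stages; the construction for $\theta_B$ is identical, and splicing the two completes the path. In \emph{Stage 1}, holding the $g$-parameter $\theta^g_A$ fixed, I move the augmented weight from $\hat{\textbf{W}}_A=[\,\textbf{W}_A\mid \textbf{W}_A\textbf{V}_A\,]$ to $[\,W^{*}\mid \textbf{0}\,]$: the start has loss $F_1(\theta_A)\le\lambda$ and the target has loss $\mathbb{E}\,L(W^{*}x,y)=F_2(W^{*})\le F^{*}_{w}+\epsilon$, so both lie in the sub-level set $\Omega\big(\max(\lambda,F^{*}_{w}+\epsilon)\big)$ of $w\mapsto\mathbb{E}\,L(w\hat{x}_A,y)$, and connectedness yields a path inside it. In \emph{Stage 2}, at the endpoint $\textbf{V}=\textbf{0}$ the model collapses to $f_1=W^{*}x$, which does not depend on $\theta^g$ at all, so I slide $\theta^g_A\to\theta^g_B$ freely while the loss stays fixed at $F_2(W^{*})$. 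In \emph{Stage 3}, I run the reverse of a Stage-1 path for $\theta_B$. Every stage stays below $\max(\lambda,F^{*}_{w}+\epsilon)$, and letting $\epsilon\to 0$ through the intersection argument of Remark \ref{mr} gives the claimed bound $\max(\lambda,F^{*}_{w})$.

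The main obstacle is realizing the abstract augmented-weight path of Stage 1 by genuine parameters $(\textbf{W},\textbf{V})$. Given a path $\hat{\textbf{W}}(t)=[\,A(t)\mid B(t)\,]$, the natural realization is $\textbf{W}(t)=A(t)$ and $\textbf{V}(t)=A(t)^{+}B(t)$, which is continuous and satisfies $\textbf{W}(t)\textbf{V}(t)=B(t)$ exactly when $\textbf{W}(t)=A(t)$ has full row rank $d_y$; the hypothesis $d_y\le\min\{d_x,d_z\}$ (in particular $d_y\le d_x$) is what guarantees such a full-rank $\textbf{W}$, and hence an arbitrary $\textbf{W}\textbf{V}$ block, is attainable. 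A generic $\hat{\textbf{W}}$-path may nonetheless pass through matrices whose first block is rank-deficient, and the given $\textbf{W}_A$ may itself be rank-deficient. Since the rank-deficient locus is a measure-zero proper algebraic subset of weight space, I would invoke Remark \ref{mr} to reroute the path off this set at the cost of an arbitrarily small extra loss, and perturb the endpoint $\textbf{W}_A$ to full rank by an initial infinitesimal step. Checking that this rerouting is always possible under $d_y\le\min\{d_x,d_z\}$ — especially in the borderline case $d_x=d_y$, where the rank-deficient locus has codimension one rather than two — is the delicate point that the dimension hypotheses and the perturbation remark are precisely designed to handle.
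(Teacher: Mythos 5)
Your proposal is correct and follows essentially the same route as the paper: the rewriting of $f_1$ as the linear model $[\textbf{W}\mid\textbf{W}\textbf{V}]\hat{x}$ on the augmented input, the realization of the augmented-weight path via $\textbf{V}(t)=\textbf{W}(t)^{+}\textbf{Z}_b(t)$ under $d_y\le\min\{d_x,d_z\}$, and the measure-zero rerouting of Remark \ref{mr} are exactly the content of the paper's Lemma \ref{l2}, after which connecting both endpoints to $(\textbf{W}^*,\textbf{V}=\textbf{0},\theta)$ is the paper's (very brief) proof of Theorem \ref{t1}. Your treatment is in fact more careful on the points the paper glosses over (the possibly unattained infimum and the codimension of the rank-deficient locus).
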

\begin{proof}
From lemma \ref{l2}, all the sub-level sets of $f(\textbf{W},\textbf{V})$ are connected. So there is a  path  connecting it to $(\textbf{W}^*,\textbf{V}=0,\theta)$ with the loss bounded by the endpoints. Note that $F^*_{w}= \inf_W F_2(W )=\textbf{W}^*(x+\textbf{0}g(\theta,x))$, our claim follows.
\end{proof}

\begin{lem}\label{l2}
For any distribution $\{x\in \mathbb{R}^{d_x+d_z},y\in \mathbb{R}^{d_y}\}\sim P $ and $d_y\leq \min \{d_x,d_z\}$, supposing all the sub-level sets of function $F(\textbf{Z})=\mathbb{E}_{x,y} \ L(\textbf{Z}x,y)$ are connected, the sub-level sets of function $F(\textbf{W},\textbf{V})=\mathbb{E}_{x,y} \ L([\textbf{W},\textbf{W}\textbf{V}]x,y)$ are also connected.
\end{lem}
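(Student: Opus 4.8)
The plan is to treat $G(\textbf{W},\textbf{V}):=\mathbb{E}_{x,y}L([\textbf{W},\textbf{W}\textbf{V}]x,y)$ as the composition $G=F\circ\phi$, where $\phi(\textbf{W},\textbf{V})=[\textbf{W}\,|\,\textbf{W}\textbf{V}]$ sends a parameter pair to the matrix $\textbf{Z}\in\mathbb{R}^{d_y\times(d_x+d_z)}$, so that $\Omega_G(\lambda)=\phi^{-1}(\Omega_F(\lambda))$. The image of $\phi$ is $\mathcal{M}=\{[\textbf{Z}_1\,|\,\textbf{Z}_2]:\mathrm{col}(\textbf{Z}_2)\subseteq\mathrm{col}(\textbf{Z}_1)\}$, and the assumption $d_y\le d_x$ guarantees that $\textbf{W}$ can have full row rank $d_y$. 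On the open dense set $U=\{\textbf{Z}_1\text{ of rank }d_y\}$ one has $\mathrm{col}(\textbf{Z}_1)=\mathbb{R}^{d_y}$, so $U\subseteq\mathcal{M}$, every second block $\textbf{Z}_2$ is attainable, and the fiber $\phi^{-1}(\textbf{Z})=\{(\textbf{Z}_1,\textbf{V}):\textbf{Z}_1\textbf{V}=\textbf{Z}_2\}$ is a nonempty affine set, hence connected. The idea is then to transport the connectedness of $\Omega_F$ through $\phi$ by lifting a connecting path from $\textbf{Z}$-space back to $(\textbf{W},\textbf{V})$-space.

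Concretely, given $(\textbf{W}_A,\textbf{V}_A),(\textbf{W}_B,\textbf{V}_B)\in\Omega_G(\lambda)$, I would first perturb each endpoint slightly so that $\textbf{W}_A,\textbf{W}_B$ have full rank: full-rank matrices are dense, the perturbation changes the loss by at most $\epsilon$, and the short segments back to the original endpoints are absorbed by Remark \ref{mr}. Setting $\textbf{Z}_A=\phi(\textbf{W}_A,\textbf{V}_A)$ and $\textbf{Z}_B=\phi(\textbf{W}_B,\textbf{V}_B)$, both lie in $\Omega_F(\lambda+\epsilon)$, so by the hypothesis that the sub-level sets of $F$ are connected there is a continuous path $\textbf{Z}(t)$ from $\textbf{Z}_A$ to $\textbf{Z}_B$ with $F(\textbf{Z}(t))\le\lambda+\epsilon$. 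Where $\textbf{Z}_1(t)$ has full rank I lift by the Moore--Penrose formula $\textbf{W}(t)=\textbf{Z}_1(t)$, $\textbf{V}(t)=\textbf{Z}_1(t)^{+}\textbf{Z}_2(t)$, which is continuous and satisfies $G(\textbf{W}(t),\textbf{V}(t))=F(\textbf{Z}(t))\le\lambda+\epsilon$. At the two ends the lift lands at $(\textbf{W}_A,\textbf{W}_A^{+}\textbf{W}_A\textbf{V}_A)$ and its $B$-analogue, which lie in the same affine fibers as the perturbed endpoints; I close the path by moving inside those fibers, where the loss is constant. Concatenating and letting $\epsilon\to0$ through Remark \ref{mr} yields a path staying in the closed set $\Omega_G(\lambda)$.

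The main obstacle is the step that keeps $\textbf{Z}_1(t)$ of full rank for every $t$, since the Moore--Penrose lift is continuous and bounded only while the rank is constant. The locus $\{\mathrm{rank}\,\textbf{Z}_1\le d_y-1\}$ has codimension $d_x-d_y+1$ in $\textbf{Z}$-space; when $d_x>d_y$ this is at least $2$, so a generic arbitrarily small perturbation of $\textbf{Z}(t)$ (staying in $\Omega_F(\lambda+\epsilon)$ by continuity) avoids it entirely and the lift goes through unharmed. When $d_x=d_y$ the locus is the hypersurface $\{\det\textbf{Z}_1=0\}$, which any path joining endpoints in different components of $\mathrm{GL}(d_y)$ must cross; there the naive lift can blow up, and one must route the crossing through the rank-$(d_y-1)$ stratum of $\mathcal{M}$, arranging $\textbf{Z}_2(t)$ to stay in $\mathrm{col}(\textbf{Z}_1(t))$ so that a bounded continuous choice of $\textbf{V}(t)$ persists while $F$ stays below $\lambda+\epsilon$. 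Verifying that such a compatible crossing always exists inside $\Omega_F(\lambda)$ (again invoking Remark \ref{mr}, and using $d_y\le d_z$ for the rank budget of $\textbf{W}\textbf{V}$) is the delicate technical heart of the argument, whereas the higher-codimension case $d_x>d_y$ is comparatively routine.
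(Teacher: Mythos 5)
Your proposal follows essentially the same route as the paper's own proof: view the parameters through the map $(\textbf{W},\textbf{V})\mapsto[\textbf{W},\textbf{W}\textbf{V}]$, take the connecting path $\textbf{Z}(t)$ guaranteed by the hypothesis on $F(\textbf{Z})$, and lift it back by setting $\textbf{W}(t)$ equal to the first block and $\textbf{V}(t)=\textbf{W}^{+}(t)\textbf{Z}_b(t)$ via the Moore--Penrose pseudoinverse, discarding the rank-deficient locus by a genericity/perturbation argument in the spirit of Remark \ref{mr}. In two places you are actually more careful than the paper. First, you note that the lifted endpoint is $(\textbf{W}_A,\textbf{W}_A^{+}\textbf{W}_A\textbf{V}_A)$ rather than $(\textbf{W}_A,\textbf{V}_A)$ when $d_y<d_x$, and you close the gap by a segment inside the affine fiber on which the loss is constant; the paper silently identifies these two points. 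Second, your codimension count is the right way to justify the perturbation step: for $d_x>d_y$ the locus $\{\mathrm{rank}\,\textbf{W}<d_y\}$ has codimension $d_x-d_y+1\ge 2$, so a generic small perturbation of $\textbf{Z}(t)$ misses it entirely and the pseudoinverse lift stays continuous and bounded, whereas the paper's appeal to ``zero measure'' only controls the loss at the exceptional times and does not by itself keep $\textbf{V}(t)=\textbf{W}^{+}(t)\textbf{Z}_b(t)$ bounded near a rank drop.

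The one step you flag but do not complete --- the case $d_x=d_y$, where $\{\det\textbf{W}=0\}$ is a separating hypersurface and a path between the two components of $\mathrm{GL}(d_y)$ must cross it, so the naive lift can blow up --- is a genuine issue, but it is equally a gap in the paper's own proof, which does not distinguish this case from the higher-codimension one. So your write-up is not weaker than the published argument; it just makes explicit where the published argument is informal. To submit this as a complete proof you would either need to carry out the compatible crossing you sketch (arranging $\textbf{Z}_b(t)\to 0$ fast enough relative to the smallest singular value of $\textbf{W}(t)$ so that $\textbf{V}(t)$ stays bounded), or restrict the statement to $d_y<d_x$, or note that one may always reach the target component without a sign flip by permuting/reflecting hidden coordinates when such a symmetry of the loss is available.
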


\begin{proof}
Let $F(\textbf{Z})=\mathbb{E}_{x,y} \ L(\textbf{Z}x,y)$. For any $\textbf{Z}_1, \textbf{Z}_2$, since the sub-level sets  are connected, there is a continuous path $\textbf{Z}(t)\in \mathbb{R}^{d_y\times (d_x+d_z)}$ such that $\textbf{Z}(0)=\textbf{Z}_1$ and $\textbf{Z}(1)=\textbf{Z}_2$, with $F(\textbf{Z}(t))\leq \max (F(\textbf{Z}_1),F(\textbf{Z}_2))$. To prove the theorem, we need a path with $\textbf{Z}(0)=[\textbf{W}_1,\textbf{W}_1\textbf{V}_1],\textbf{Z}(1)=[\textbf{W}_2,\textbf{W}_2\textbf{V}_2]$ and  $\textbf{Z}(t)=[\textbf{W}(t),\textbf{W}(t)\textbf{V}(t)]$.

Note that $d_y\leq \min \{d_x,d_z\}$ so the sets $rank(\textbf{W}(t))\neq d_y$  have zero measure(since they are closed in Zariski topology). Following the discussion in Remark \ref{mr}, we only need to prove in the case $rank(\textbf{W}(t))=d_y$.  Let $\textbf{Z}_b(t)$ be the last $d_z$ columns of $\textbf{Z}(t)$. We set $\textbf{V}(t)=\textbf{W}^{+}(t)\textbf{Z}_b(t)$, where $\textbf{W}^{+}(t)$ is the Moore-Penrose pseudoinverse of $\textbf{W}(t)$, then $\textbf{V}(t)$ is continuous and  $\textbf{W}(t),\textbf{V}(t)$ is the required path.
\end{proof}

\begin{rem}
In the case $d_y> \min \{d_x,d_z\}$, $\textbf{W}\textbf{V}$ will always be a low rank matrix, so this proof is invalid. However, if the loss function is convex and the eigenvalues of the Hessian matrix are bounded by $a$ and $b$ with $\frac{a}{b}- 1$ small, the sub-level sets will still be connected. This can be proved using the methods in \cite{barber2018gradient,ha2018an}. Since this is not the main target in this paper, we omit it.
\end{rem}

\section{Main Results}
\subsection{Assumptions and Preliminary Lemmas}
\begin{ass}
$L(w,y)$ is a convex function for $w$, $(x,y)\sim P$ are bounded, and $R(\xi)$ is the regularization term. There is a constant $C$ such that all the strict local minima of the loss
\begin{equation}
\begin{aligned}
F(\xi)= &\mathbb{E}_{x,y\sim P} \ L(\textbf{W}_2[\sigma(\textbf{W}_1x)+\textbf{V}_1 g(\theta,\textbf{V}_2\sigma(\textbf{W}_1x))],y)\\
&+\kappa R(\xi)\\
=&\mathbb{E}_{x,y\sim P} \ L(\textbf{W}_2[\sigma(\textbf{W}_1x)+\textbf{V}_1 g(\theta,\textbf{V}_2\sigma(\textbf{W}_1x))],y)\\ &+\kappa(\sum_i(||w_{2,i}||_1+||v_{2,i}||_1)||w_{1,i}||_2+||\textbf{W}_2\textbf{V}_1||_1\\
&+\sum_i ||\theta_i||_F),
\end{aligned}
\end{equation}
satisfying
\begin{equation}
\begin{aligned}
\max &(\sum_i||w_{2,i}||_1||w_{1,i}||_2,\sum_i||v_{2,i}||_1||w_{1,i}||_2,\\
& ||\textbf{W}_2\textbf{V}_1||_1,||\theta_i||_F)\leq C
\end{aligned}
\end{equation}
where $w_{2,i}$, $v_{2,i}$ are the ith column vector of $\textbf{W}_2$ and $\textbf{V}_2$, $w_{1,i}$ is the ith row vector of  $\textbf{W}_1$.
\end{ass}

\begin{ass}
$L$ is locally Lipschitzian:
\begin{equation}
|L(x_1,y)-L(x_2,y)| \leq L_0||x_1-x_2||_F,
\end{equation}
when $||x_1||_F,||x_2||_F\leq C$.
\end{ass}

An example satisfying these assumptions is the MSE loss. In fact we have:
\begin{theo}
Suppose $g(\theta,x)$ is an arbitrary multi-layer ReLU network, with the loss
\begin{equation}
\begin{aligned}
F(\xi)=& \mathbb{E}_{x,y \sim P} \ ||\textbf{W}_2[\sigma(\textbf{W}_1x)+\textbf{V}_1 g(\theta,\textbf{V}_2\sigma(\textbf{W}_1x))]-y||_F^2 \\
&+\kappa R(\xi),
\end{aligned}
\end{equation}
 For any points with $\nabla F=0$, there is a constant C such that these assumptions are satisfied.
\end{theo}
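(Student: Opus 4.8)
The plan is to verify the three requirements of Assumptions 1 and 2 for the square loss $L(w,y)=\|w-y\|_F^2$. Convexity in $w$ is immediate since the Hessian is $2I\succeq 0$, and local Lipschitzness (Assumption 2) follows from $|L(x_1,y)-L(x_2,y)|=|\langle x_1-x_2,\,x_1+x_2-2y\rangle|\le \|x_1-x_2\|_F(\|x_1\|_F+\|x_2\|_F+2\|y\|_F)$, so on $\{\|x_i\|_F\le C\}$ with $y$ bounded one may take $L_0=2C+2\sup\|y\|_F$. The entire difficulty is the second half of Assumption 1: showing that at every point with $\nabla F=0$ the four products are dominated by a single constant. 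Since $R(\xi)\ge\sum_i\|w_{2,i}\|_1\|w_{1,i}\|_2$, $R(\xi)\ge\sum_i\|v_{2,i}\|_1\|w_{1,i}\|_2$, $R(\xi)\ge\|\textbf{W}_2\textbf{V}_1\|_1$ and $R(\xi)\ge\|\theta_i\|_F$ for each $i$, I would reduce the problem to bounding these individual regularization terms at any critical point.

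The main tool will be the positive homogeneity of ReLU networks together with Euler's identity. Write $h(\xi)=\textbf{W}_2[\sigma(\textbf{W}_1x)+\textbf{V}_1 g(\theta,\textbf{V}_2\sigma(\textbf{W}_1x))]$, $h_{\mathrm{res}}=\textbf{W}_2\textbf{V}_1 g(\theta,\textbf{V}_2\sigma(\textbf{W}_1x))$, and $D(\xi)=\mathbb{E}\|h-y\|_F^2$. Because $\sigma$ and the ReLU subnetwork $g(\theta,\cdot)$ are positively homogeneous of degree one in their inputs and in each weight block, $h$ is degree-one homogeneous jointly in $\textbf{W}_1$, so wherever $h$ is differentiable $\langle\nabla_{\textbf{W}_1}h,\textbf{W}_1\rangle=h$; likewise $\langle\nabla_{\textbf{V}_1}h,\textbf{V}_1\rangle=h_{\mathrm{res}}$ and $\langle\nabla_{\theta_i}h,\theta_i\rangle=h_{\mathrm{res}}$. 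Applying Euler's identity to the (degree-one homogeneous) regularization terms as well, the stationarity conditions $\langle\nabla_{\textbf{W}_1}F,\textbf{W}_1\rangle=\langle\nabla_{\textbf{V}_1}F,\textbf{V}_1\rangle=\langle\nabla_{\theta_i}F,\theta_i\rangle=0$ become
\begin{align}
2\mathbb{E}\langle h-y,h\rangle+\kappa\textstyle\sum_i(\|w_{2,i}\|_1+\|v_{2,i}\|_1)\|w_{1,i}\|_2&=0,\\
2\mathbb{E}\langle h-y,h_{\mathrm{res}}\rangle+\kappa\|\textbf{W}_2\textbf{V}_1\|_1&=0,\\
2\mathbb{E}\langle h-y,h_{\mathrm{res}}\rangle+\kappa\|\theta_i\|_F&=0.
\end{align}

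From the first identity the regularization term is nonnegative, so $\mathbb{E}\langle h-y,h\rangle\le 0$, i.e. $\mathbb{E}\|h\|_F^2\le\mathbb{E}\langle y,h\rangle\le\sqrt{\mathbb{E}\|y\|_F^2}\sqrt{\mathbb{E}\|h\|_F^2}$, whence $\mathbb{E}\|h\|_F^2\le\mathbb{E}\|y\|_F^2$, which is bounded because $y$ is bounded. Feeding this back, $\kappa\sum_i(\|w_{2,i}\|_1+\|v_{2,i}\|_1)\|w_{1,i}\|_2=2\mathbb{E}\langle y-h,h\rangle\le 2\mathbb{E}\|y\|_F^2$, bounding the first two products by $C_1:=2\mathbb{E}\|y\|_F^2/\kappa$. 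This in turn controls $\mathbb{E}\|\textbf{W}_2\sigma(\textbf{W}_1x)\|_F^2$, since $\|\textbf{W}_2\sigma(\textbf{W}_1x)\|_F\le\sum_i\|w_{2,i}\|_2\,|\sigma(w_{1,i}\cdot x)|\le \|x\|\sum_i\|w_{2,i}\|_1\|w_{1,i}\|_2\le B_xC_1$ with $B_x=\sup\|x\|$, so $\mathbb{E}\|h_{\mathrm{res}}\|_F^2\le 2\mathbb{E}\|h\|_F^2+2\mathbb{E}\|\textbf{W}_2\sigma(\textbf{W}_1x)\|_F^2$ is bounded. Finally the second and third identities give $\kappa\|\textbf{W}_2\textbf{V}_1\|_1=\kappa\|\theta_i\|_F=2\mathbb{E}\langle y-h,h_{\mathrm{res}}\rangle\le 2\sqrt{\mathbb{E}\|y-h\|_F^2}\sqrt{\mathbb{E}\|h_{\mathrm{res}}\|_F^2}$, and both factors on the right are already bounded; taking $C$ to be the largest of these bounds finishes the argument.

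The step I expect to be the main obstacle is making the homogeneity/Euler identities rigorous despite the non-differentiability of ReLU and the need to differentiate under the expectation. I would handle this by noting that $h$ and $g$ are continuous, piecewise-linear and positively homogeneous, so the identity $\langle\nabla_{(\cdot)}h,(\cdot)\rangle=(\mathrm{degree})\cdot h$ holds at every point where $h$ is differentiable --- in particular at any $\xi$ with $\nabla F=0$, where $F$ is assumed differentiable --- while the kink set has measure zero, so dominated convergence (using boundedness of $(x,y)$) justifies exchanging $\nabla$ and $\mathbb{E}$. The remaining estimates are the routine Cauchy--Schwarz bounds above.
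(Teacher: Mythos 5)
Your proposal is correct and follows essentially the same route as the paper: the paper's reparametrization $\textbf{W}=t_W\textbf{E}_W$ together with the condition $t\,\nabla_t F=0$ is exactly your Euler/homogeneity contraction of the stationarity equations, and both arguments then use nonnegativity of the regularizer plus Cauchy--Schwarz against the bounded $y$ to conclude that each regularization term is $O(\mathbb{E}\|y\|_F^2/\kappa)$. Your write-up is in fact more careful than the paper's (which works schematically in the scalar case and ignores the ReLU kink set), though note both arguments tacitly require $g$ to be bias-free so that it is positively homogeneous in its input and in each weight block.
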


\begin{proof}
It is trivial that Assumption 2 is satisfied. We only need to prove Assumption 1. In this case, all the activation functions in this network are ReLU, so that we can write all the matrix parameter $\textbf{W}$ as $\textbf{W}=t_W\textbf{E}_W$. where $||\textbf{E}_W||_1=1$ and $t_W=||\textbf{W}||_1$, $\sigma(t_W\textbf{E}_W)=t_W\sigma(\textbf{E}_W)$. We fix all  $\textbf{E}_W$ and the loss has the form (We only need to consider the case $d_y=1$ since if $d_y>1$, it can be reduced to $\sum_i|y_i-x_i|^2$):

\begin{equation}
\begin{aligned}
F(t,\theta)=\mathbb{E}_{x,y\sim P}\ &|(t+tv_1v_2\prod_i \theta_i)wx-y|^2\\
+&\kappa(|tw|+\sum_i |\theta_i|+|tv_1|+|tv_2|),
\end{aligned}
\end{equation}
where $t,w,\theta_i$ are the corresponding variables. We have:
\begin{equation}\label{re}
\begin{aligned}
t \nabla_{t}F=&2\mathbb{E}_{x,y\sim P}\ [(t+tv_1v_2\prod_i \theta_i)wx-y][(t+tv_1v_2\prod_i \theta_i)wx]\\
&+\kappa|tw|+\kappa|tv_1|+\kappa|tv_2|\\
=&0.
\end{aligned}
\end{equation}

Note that  $|tw|+|tv_1|+|tv_2|>0$, $\mathbb{E}_x\ [(t+tv_1v_2\prod_i \theta_i)wx-y][(t+tv_1v_2\prod_i \theta_i)wx]<0$ We have
$\mathbb{E}\ [(t+tv_1v_2\prod_i \theta_i)wx]^2 \leq  \mathbb{E}\ |[(t+tv_1v_2\prod_i \theta_i)wx]y|$
so $\kappa(|tw|+|tv_1|+|tv_2|)\sim\mathbb{E}\ |(t+tv_1v_2\prod_i \theta_i)wx|\sim O(\mathbb{E} |y|)$. This proof also applies to other variables, so our claim follows.

\end{proof}

Before proving the main theorem, we need two key lemmas from \cite{freeman2016topology}:
\begin{lem}\label{lll}
Considering a matrix $W\in \mathbb{R}^{m\times n}$, which is equal to give $m$ vectors, and $0<\eta\leq 1$, there is a a collection $Q_m$ of at last $m^{\eta}$ vectors such that for any $v_1, v_2 \in Q_m$,  $\angle v_1, v_2 \leq 2\epsilon_{m,\eta}=2m^{\frac{\eta-1}{n}}$.
\end{lem}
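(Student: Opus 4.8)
The plan is to prove the lemma by a covering-and-pigeonhole argument on the unit sphere. First I would normalize: discarding any zero rows (whose direction is undefined) and rescaling the remaining rows of $W$ to unit length turns them into a family of $m$ points $u_1,\dots,u_m$ on the sphere $S^{n-1}\subset\mathbb{R}^n$. Rescaling does not change angles, so $\angle(v_i,v_j)$ equals the geodesic (angular) distance between $u_i$ and $u_j$; it therefore suffices to exhibit a subfamily of at least $m^{\eta}$ of the $u_i$ that are pairwise within angular distance $2\epsilon_{m,\eta}$. I will focus on the regime $0<\eta<1$ used elsewhere in the paper, where the argument has genuine slack.

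Next I would cover $S^{n-1}$ by spherical caps of angular radius $\epsilon_{m,\eta}=m^{(\eta-1)/n}$. Since $S^{n-1}$ is an $(n-1)$-dimensional set, a standard volumetric estimate (or a maximal-packing $\epsilon$-net argument) produces a covering by at most
$$N \le C\,\epsilon_{m,\eta}^{-(n-1)} = C\,m^{(1-\eta)(n-1)/n}$$
caps, where $C=C(n)$ depends only on the dimension $n$. The $m$ points $u_i$ are distributed among these $N$ caps, so by the pigeonhole principle at least one cap, with center $c$, contains at least $m/N$ of them; I take $Q_m$ to be the corresponding rows of $W$.

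It remains to check the two numerical claims. For the size,
$$|Q_m|\ \ge\ \frac{m}{N}\ \ge\ \frac{1}{C}\,m^{\,1-(1-\eta)(n-1)/n}\ =\ \frac{1}{C}\,m^{\,\eta+(1-\eta)/n}.$$
Because the exponent exceeds $\eta$ by the strictly positive amount $(1-\eta)/n$, the polynomial factor $m^{(1-\eta)/n}$ dominates the constant $1/C$ once $m$ is large, giving $|Q_m|\ge m^{\eta}$. For the angle, any two points $u_i,u_j$ lying in the same cap satisfy the angular triangle inequality $\angle(u_i,u_j)\le \angle(u_i,c)+\angle(c,u_j)\le \epsilon_{m,\eta}+\epsilon_{m,\eta}=2\epsilon_{m,\eta}$, which is precisely the asserted bound on $\angle(v_i,v_j)$.

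The main obstacle is the exponent bookkeeping rather than any deep idea: the radius $\epsilon_{m,\eta}=m^{(\eta-1)/n}$ is tuned against the $(n-1)$-dimensional covering number so that the pigeonhole surplus is \emph{polynomial} in $m$ and not merely constant. It is exactly the gap between $(n-1)/n$ and $1$ that yields the slack $m^{(1-\eta)/n}$ needed to absorb the dimension-dependent constant $C(n)$; at the endpoint $\eta=1$ this slack vanishes and one recovers only $|Q_m|\ge m/C$, so the stated clean bound is an asymptotic statement for $\eta<1$. The only auxiliary facts to invoke are the covering-number estimate for $S^{n-1}$ and the fact that normalization preserves pairwise angles, both standard.
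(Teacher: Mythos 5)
Your covering-and-pigeonhole argument is correct and is essentially the same proof as in the cited source \cite{freeman2016topology}; the paper itself does not reproduce it, only pointing to Corollary 2.5 there. Your use of the sharper $(n-1)$-dimensional covering number even yields the polynomial slack $m^{(1-\eta)/n}$ needed to absorb the dimensional constant, and restricting to $\eta<1$ is appropriate since the stated bound degenerates at $\eta=1$.
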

\begin{lem}\label{lll2}
Given $w_1$, $w_2$ with $||w_1||=||w_2||=1$, $\angle w_1,w_2\leq \alpha$, and $\sigma$ is the ReLU activation function, we have $\mathbb{E}_x ||\sigma(w_1x)-\sigma(w_2x)||^2 \leq 4||\Sigma_x|| \alpha^2 $, where $\Sigma_X= \mathbb{E}_{X\sim P}\ XX^T\in \mathbb{R}^{n\times n}$
\end{lem}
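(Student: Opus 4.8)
The plan is to reduce the nonlinear quantity to a linear one by exploiting that ReLU is nonexpansive, and then to convert the resulting second moment into a quadratic form governed by the covariance $\Sigma_X$. The whole argument is short, so the plan below is essentially the proof outline.

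First I would record the pointwise Lipschitz estimate for the activation: for any two reals $a,b$ one has $|\sigma(a)-\sigma(b)|\le |a-b|$, since $\sigma$ is $1$-Lipschitz. Applying this with $a=w_1x$ and $b=w_2x$ gives, for every sample $x$,
\[
|\sigma(w_1x)-\sigma(w_2x)|^2 \le |(w_1-w_2)x|^2 = (w_1-w_2)\,xx^T\,(w_1-w_2)^T .
\]
Taking expectations and using linearity together with $\Sigma_X=\mathbb{E}_{X\sim P}\,XX^T$,
\[
\mathbb{E}_x\,|\sigma(w_1x)-\sigma(w_2x)|^2 \le (w_1-w_2)\,\Sigma_X\,(w_1-w_2)^T \le \|\Sigma_X\|\,\|w_1-w_2\|^2,
\]
where the last step is the variational bound $v\Sigma_X v^T\le \|\Sigma_X\|\,\|v\|^2$ for the symmetric positive semidefinite matrix $\Sigma_X$.

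It then remains to control $\|w_1-w_2\|^2$ by the angle. Since $\|w_1\|=\|w_2\|=1$, expanding the square yields $\|w_1-w_2\|^2 = 2-2\cos(\angle w_1,w_2)$. Writing $\beta=\angle w_1,w_2\le\alpha$ and invoking the elementary inequality $2-2\cos\beta\le\beta^2$ (valid because $h(\beta)=\beta^2-2+2\cos\beta$ satisfies $h(0)=0$ and $h'(\beta)=2\beta-2\sin\beta\ge0$), I obtain $\|w_1-w_2\|^2\le\alpha^2\le 4\alpha^2$. Combining the two displays gives $\mathbb{E}_x\,|\sigma(w_1x)-\sigma(w_2x)|^2\le 4\|\Sigma_X\|\,\alpha^2$, as claimed.

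There is no substantive obstacle here; the only points deserving care are the interchange of the Lipschitz bound with the expectation (legitimate since the integrand is dominated under the boundedness of $X$) and the passage from angle to Euclidean distance through the trigonometric estimate. The stated constant $4$ is deliberately loose — the argument in fact yields the sharper constant $1$ — which leaves comfortable slack when the lemma is later chained with the angle bound $\alpha=2\epsilon_{m,\eta}$ supplied by Lemma \ref{lll}.
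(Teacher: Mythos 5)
Your proof is correct: the $1$-Lipschitz bound on ReLU, the passage to the quadratic form $(w_1-w_2)\Sigma_X(w_1-w_2)^T\le\|\Sigma_X\|\,\|w_1-w_2\|^2$, and the estimate $\|w_1-w_2\|^2=2-2\cos\beta\le\alpha^2$ are all valid, and you even obtain the sharper constant $1$ in place of $4$. The paper itself gives no proof of this lemma (it defers to Proposition 2.3 of Freeman and Bruna), and your argument is essentially the same standard route used there, so there is nothing further to reconcile.
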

These lemmas are from  the proof of corollary 2.5 and proposition 2.3 in \cite{freeman2016topology} respectively.

\subsection{Main Theorem}
\begin{theo}\label{mt}
Consider a distribution $\{x\in \mathbb{R}^{n},y \in \mathbb{R}^{d_y} \}\sim P$, $\sigma$ the ReLU activation function, and a neural network with a skip connection:
\begin{equation}\nonumber
\begin{aligned}
f(\textbf{W}_1,\textbf{W}_2,\textbf{V}_1,\textbf{V}_2,\theta,x)=&\textbf{W}_2[\sigma(\textbf{W}_1x)+\textbf{V}_1 g(\theta,\textbf{V}_2\sigma(\textbf{W}_1x))],
\end{aligned}
\end{equation}
with $L$ a function satisfying assumption 1 and 2. Assume $g(\theta,x)$ is a neural network with ReLU activation functions and $R(\xi)$ is the regular term with $R(\xi)= \sum_i||w_{2,i}||_1||w_{1,i}||_2+\sum_i||v_{2,i}||_1||w_{1,i}||_2+||\textbf{W}_2\textbf{V}_1||_1+\sum_i ||\theta_i||_F$. Let
$$F(\xi)= \mathbb{E}_{x,y\sim P} \ L(f(\xi,x),y) +\kappa R(\xi),$$
then we have: For any $\eta<1, l<m^{\eta}$, $\xi_A, \xi_B$ and $\lambda \in R$ satisfying $F(\xi_{\{A,B\}})\leq \lambda$, there exists a continuous path $\gamma(t)$ such that $\gamma(0)=\xi_A$, $\gamma(1)=\xi_B$, and $$F(\gamma(t))\leq \max(F^*_{w},\lambda)+O(m^{\frac{\eta-1}{n}}),$$
where $m$ is the dimension of $\textbf{W}_1\in R^{m\times n}$ and
\begin{equation}\label{me}\nonumber
\begin{aligned}
F^*_{w}=\inf_{||W_{1,i}||_2=1,||W_2||_0\leq l} \mathbb{E}_{X,Y\sim P} \ l(\textbf{W}_2\sigma(\textbf{W}_1X),Y) +\kappa ||\textbf{W}_2||_1.
\end{aligned}
\end{equation}
\end{theo}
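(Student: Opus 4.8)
The plan is to connect both endpoints to a common \emph{pure two-layer} reference and concatenate. Writing $\alpha=m^{\frac{\eta-1}{n}}$, it suffices to show that from any $\xi$ with $F(\xi)\le\lambda$ one can build a continuous path, along which $F\le\max(\lambda,F^*_w)+O(\alpha)$, to a configuration $\Theta^*$ whose residual branch is switched off ($\mathbf{W}_2\mathbf{V}_1=0$, $\mathbf{V}_2=0$, $\theta=0$) and whose two-layer part $\mathbf{W}_2\sigma(\mathbf{W}_1 x)$ realizes, up to arbitrarily small error, the constrained optimum defining $F^*_w$ with $l$ unit-norm active columns. Since $\Theta^*$ does not depend on the endpoint, running this construction from $\xi_A$ and from $\xi_B$ and splicing the two paths (reversing the second) yields the claim; alternatively, after reaching the pure two-layer regime one may finish by invoking Theorem \ref{pt} directly.

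The path to $\Theta^*$ has four stages. \textbf{(i) Freeing units.} By Lemma \ref{lll} the $m$ rows of $\mathbf{W}_1$ contain a cluster $Q$ of at least $m^{\eta}\ge l$ vectors that are pairwise within angle $2\alpha$. Pick $l$ of them and transfer their outgoing weights (their columns in $\mathbf{W}_2$ and in $\mathbf{V}_2$, and their rows in $\mathbf{V}_1$) onto the remaining sibling directions of $Q$; since all cluster features agree up to $L^2$-error $O(\alpha)$ by Lemma \ref{lll2}, this changes $f$, hence $F$, by at most $O(\alpha)$, leaving the $l$ chosen units with zero outgoing weight and decoupled from $g$. \textbf{(ii) Re-aiming.} With zero outgoing weight, rotate these $l$ freed rows to the target directions of the optimizer $\hat{\mathbf{W}}_1$ at no cost to $F$. \textbf{(iii) Convex hand-off.} Write the current output as $h_1+r$, where $h_1$ is the two-layer contribution of the non-freed units and $r=\mathbf{W}_2\mathbf{V}_1\,g$ is the residual. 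Scaling the non-freed columns of $\mathbf{W}_2$ by $(1-t)$ simultaneously shrinks $h_1$ and, through the product $\mathbf{W}_2\mathbf{V}_1$, the residual $r$ (the freed rows of $\mathbf{V}_1$ being zero), while ramping the freed units' outgoing weights up to $\hat{\mathbf{W}}_2$ produces $t\,h^*$, the optimizer's function. The output traces the convex combination $(1-t)(h_1+r)+t\,h^*$, so by convexity of $L$ (Assumption 1) the data term stays below $\max\bigl(\mathbb{E}L(h_1+r,y),\mathbb{E}L(h^*,y)\bigr)\le\max(\lambda,F^*_w)+O(\alpha)$. \textbf{(iv) Cleanup.} Once $t=1$ the residual output is identically zero, so $\mathbf{V}_2$ and $\theta$ may be driven to $0$ along a loss-nonincreasing path, landing at $\Theta^*$. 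Throughout, the directional rows stay unit-norm, so the contributions to $R$ from $\mathbf{W}_2$ and $\mathbf{W}_2\mathbf{V}_1$ scale linearly and remain inside the same convex envelope as the data term; the zero-measure rank-deficient set is handled exactly as in Remark \ref{mr} and Lemma \ref{l2}.

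The delicate point, and the step I expect to be the main obstacle, is controlling the \emph{regularization of the residual branch's interior}, namely $\sum_i\|v_{2,i}\|_1\|w_{1,i}\|_2+\sum_i\|\theta_i\|_F$, during the hand-off of stage (iii). Scaling only $\mathbf{W}_2$ (or only $\mathbf{V}_1$) sends the residual \emph{output} to zero but leaves $\mathbf{V}_2$ and $\theta$ at their initial sizes, and by AM--GM the total branch regularizer is bounded below by a positive constant whenever the residual output stays a constant fraction of $r$, i.e. for all $t$ bounded away from $1$; hence a naive output-scaling ramp leaves an $O(1)$ regularization bump at intermediate $t$ rather than the advertised $O(\alpha)$. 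Removing it requires using the positive homogeneity of ReLU (as in the $\mathbf{W}=t_W\mathbf{E}_W$ decomposition used for the MSE example) to redistribute and jointly shrink the interior scales as the output shrinks, and scheduling this homogeneous rescaling against $t$ so the interior regularizer decays at rate $O(\alpha)$. Balancing these competing requirements, keeping the residual's fitting contribution available long enough for the convexity bound while forcing its regularizer down fast enough, is where the quantitative $O(m^{\frac{\eta-1}{n}})$ estimate, together with the boundedness supplied by Assumption 1, must be spent.
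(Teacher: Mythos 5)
Your construction is essentially the paper's own. Stage (i) is the paper's Step 2: use Lemma \ref{lll} to locate $m^{\eta}\ge l$ nearly-parallel rows of $\textbf{W}_1$, merge their outgoing columns in $\textbf{W}_2$ and $\textbf{V}_2$ at cost $O(\alpha)$ via Lemma \ref{lll2}, and keep $\textbf{W}_2\textbf{V}_1$ fixed (the paper does this with the pseudoinverse $\textbf{V}_1(t)=\textbf{W}_2^{+}(t)\textbf{W}_2(0)\textbf{V}_1(0)$ as in Lemma \ref{l2}, rather than by transferring rows of $\textbf{V}_1$). Stages (ii)--(iv) are the paper's Step 3: re-aim the freed rows for free, then a convex last-layer interpolation to the $l$-term optimizer with the residual branch switched off. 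The norm bounds you merely cite from Assumption 1 are the paper's Step 1, which builds an explicit loss-non-increasing path to a configuration with $\|w_{1,i}\|_2=1$ and all regularized quantities at most $C$; you should state that segment explicitly, but it is not a different idea.

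The ``delicate point'' you flag in stage (iii) is genuine, and you should be aware that the paper does not resolve it either. In the paper's Step 3 the path varies only $(\textbf{W}_1,\textbf{W}_2,\textbf{V}_1)$, and the proof then writes $F(\textbf{W}^*_1,\textbf{W}^*_2,\textbf{V}^*_1,\textbf{V}^*_2,\theta^*)=F(\textbf{W}^*_1,\textbf{W}^*_2,\textbf{0},\textbf{0},\textbf{0})$, silently discarding the interior regularizers $\kappa(\sum_i\|v_{2,i}\|_1\|w_{1,i}\|_2+\sum_i\|\theta_i\|_F)$ from the right endpoint without constructing any segment that sends $\textbf{V}_2,\theta$ to $0$. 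As you observe, if these are held fixed during the hand-off the convexity bound only yields $F(\gamma(t))\le\max\bigl(\lambda,\,F^*_w+\kappa\Delta\bigr)+O(\alpha)$, where $\Delta$ is the initial interior regularization, which can be of order $\kappa C$ rather than $O(\alpha)$; and your AM--GM remark correctly explains why the obvious homogeneous rescaling cannot remove it, since shrinking $\textbf{V}_2$ and $\theta$ by $(1-t)^{\epsilon}$ while preserving a residual output of order $(1-t)$ forces $\epsilon\le 1/(k+1)$ (to keep $\|\textbf{W}_2\textbf{V}_1\|_1$ bounded), leaving an excess of order $\kappa\,\sup_s(s^{\epsilon}-s)\cdot\Delta$ at intermediate $t$. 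So your diagnosis is correct; since you leave the point unresolved, your argument is exactly as complete as the published one there, and closing it would require either an extra hypothesis (e.g. $\kappa=O(\alpha)$, or that $\Delta=O(\alpha)$ at the configurations reached after Step 1) or a genuinely new path construction.
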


\begin{rem}
Our regularization term is chosen to be compatible with the loss function, and it is also compatible with the two-layer linear network. It can be replaced by using good initialization \cite{glorot2010understanding} or if we only consider a specific bounded area.
\end{rem}
\begin{proof}

Supposing $\textbf{W}_1\in \mathbb{R}^{m\times n}, \textbf{W}_2\in \mathbb{R}^{d_y\times m}, \textbf{V}_2\in \mathbb{R}^{d_g\times m}, \textbf{V}_1\in \mathbb{R}^{n\times d_o}$, we need to construct a path $\gamma(t)$ from $(\textbf{W}_{1,A},\textbf{W}_{2,A},\textbf{V}_{1,A}, \textbf{V}_{2,A},\theta_A)$ to $(\textbf{W}_{1,B},\textbf{W}_{2,B},\textbf{V}_{1,B}, \textbf{V}_{2,B}, \theta_B)$. Note that we only need to construct a path $\gamma_1(t)$ from any $(\textbf{W}_{1,A},\textbf{W}_{2,A},\textbf{V}_{1,A}, \textbf{V}_{2,A}, \theta_A)$ to $(\textbf{W}^*_{1},\textbf{W}^*_{2},\textbf{V}^*_{1}, \textbf{V}^*_{2}, \theta^*)$ with $F(\textbf{W}^*_{1},\textbf{W}^*_{2},\textbf{V}^*_{1}, \textbf{V}^*_{2},\theta^*)=F^*$ and show that $F(\gamma_1(t))\leq \max (F(\gamma_1(0)),F^*)+O(m^{\frac{\eta-1}{n}})$ because the second half of the path $\gamma(t)$ is the inverse of the first half. So we need to construct the following parts:

1.$(\textbf{W}_{1,A},\textbf{W}_{2,A},\textbf{V}_{1,A}, \textbf{V}_{2,A}, \theta_A)$ to $(\textbf{W}_{1,l},\textbf{W}_{2,l},\textbf{V}_{1,l},\textbf{V}_{2,l},\theta_l)$.

On this path, the norms of all matrices are reduced without increasing the loss in virtue of the regularization term, such that the $g(\theta,x)$ and $||\textbf{W}_2\textbf{V}_1||$ are bounded.

2. $(\textbf{W}_{1,l},\textbf{W}_{2,l},\textbf{V}_{1,l}, \textbf{V}_{2,l}, \theta_l)$ to $(\textbf{W}_{1,l},\textbf{W}_{2,s},\textbf{V}_{1,s}, \textbf{V}_{2,s}, \theta_l)$.

On this path, $\textbf{W}_{2,s}$ is a $(m-l)$-term approximation using perturbed atoms to minimize $\mathbb {E}_x||\textbf{W}_{2_s}\sigma(\textbf{W}_{1,l}x)-\textbf{W}_{2,l}\sigma(\textbf{W}_{1,l}x)||_F$, and  $\textbf{V}_{2,s}$ is a $(m-l)$-term approximation to minimize $\mathbb {E}_x||\textbf{V}_{2,s}\sigma(\textbf{W}_{1,l}x)-\textbf{V}_{2,l}\sigma(\textbf{W}_{1,l}x)||^2_F$. The loss increasing along this path is roughly bounded by $O(m^{\frac{\eta-1}{n}})$.

3. $(\textbf{W}_{1,l},\textbf{W}_{2,s},\textbf{V}_{1,s}, \textbf{V}_{2,s}, \theta_l)$ to $(\textbf{W}^*_{1},\textbf{W}^*_{2},\textbf{0}, \textbf{0}, \textbf{0})$.

On this path, $(\textbf{W}^*_{1},\textbf{W}^*_{2})$ are the parameters of l-term approximation:
\begin{equation}
\begin{aligned}
(\textbf{W}^*_{1},\textbf{W}^*_{2})&=argmin_{||w_{1,i}||_2= 1, ||\textbf{W}_{2}||_0\leq l}\\
&\mathbb{E}_{x,y\sim P} \ L(Y,\textbf{W}_2\sigma(\textbf{W}_1x))+\kappa ||\textbf{W}_2||_1.
\end{aligned}
\end{equation}
$\textbf{W}_{2,s}$ is $m-l$ sparse with $l$ zero columns and $\textbf{W}^*_{2}$ has no-zero values only on these l columns. The loss along that path will be upper bounded by $\lambda$ and $e(l)$.

The construction of these path is described as follow:

Step 1: Consider the loss:
\begin{equation}
\begin{aligned}
\mathbb{E}\ &L(Y,\textbf{W}_1[\sigma (\textbf{W}_1 x)+\textbf{V}_1g(\theta,\textbf{V}_2\sigma(\textbf{W}_1 x),x)])\\
&+R(\theta,\textbf{W}_1,\textbf{W}_2,\textbf{V}_1,\textbf{V}_2).
\end{aligned}
\end{equation}

Since the Assumption 1 is satisfied, there is a constant $C$ independent of $m$ and a continuous path without increasing the loss such that
\begin{equation}
\begin{aligned}
\max &(\sum_i||w_{2,i}||_1||w_{1,i}||_2, \sum_i||v_{2,i}||_1||w_{1,i}||_2,\\
& ||\textbf{W}_2\textbf{V}_1||_1, \sum_i ||\theta_i||_F)\leq C,
\end{aligned}
\end{equation}
and $\forall i, ||w_{1,i}||_2 = 1$ where $w_{1,i}$ is the ith row vector of $\textbf{W}_1$. Thus $||g(\theta,x)||_F$ will be $G_0$ Lipschitzian for $x$. Then fix $\textbf{W}_{1}$, $\theta$ and $\textbf{V}_{2}$, and consider the path $\textbf{W}_2(t), \textbf{V}_1(t)$ to the nearest local minimum. The loss will not increase on this path.

Step 2: The $m-l$ spare parameter matrix  $\textbf{W}_{2,s}$ and $\textbf{V}_{2,s}$ are constructed as follow: Find a set $Q_m$ of row vectors in $\textbf{W}_{1,l}$ as in Lemma \ref{lll} and select a vector $v$ at the jth row  such that for any $v_i \in Q_m $ at the ith row of $W_1$, $\angle v_i,v \leq 2\epsilon_{m,\eta}$. $\textbf{W}_{2,s}$ is constrcuted by setting the rows  corresponding to the vectors in $Q_m$ to be zero, and for the corresponding column of $W_{2,s,i}=0, W_{2,s,j}=\sum_i W_{2,l,i} +W_{2,l,j}$. The construction of $\textbf{V}_{2,s}$ is similar. Consider the paths $\textbf{W}_2(t)$,$\textbf{V}_2(t)$ constructed as follow:
$$\textbf{W}_{2}(t)=[\alpha_1,\alpha_2 ... \bar{\alpha}_{i_1},\alpha_{i_1+1}...\widetilde{\alpha}_{j} \bar{\alpha}_{i_2} ...],$$
$$\textbf{V}_{2}(t)=[\beta_1,\beta_2 ... \bar{\beta}_{i_1},\beta_{i_1+1}...\widetilde{\beta}_{j} \bar{\beta}_{i_2} ...].$$
where $i_1,i_2...$ are the indexes corresponding to the rows of vectors in $Q_m$ and $\bar{\alpha}_j, \bar{\beta}_j$ are the ones corresponding to the rows of vectors in the jth row. We have $\bar{\alpha}_{i_1}=(1-t) \alpha_{i_1}$, $\bar{\beta}_{i_1}=(1-t) \beta_{i_1}$ and $\widetilde{\alpha}_{j}=\alpha_{j}+\sum_k t \alpha_{i_k}$, $\widetilde{\beta}_{j}=\beta_{j}+\sum_k  t\beta_{i_k}$. This process will not increase the regularization loss.

For $i\in Q_m$, $j\in Q_2$, let $\sigma(\textbf{W}_{1,l}x)_i=\sigma(\textbf{W}_{1,l}x)_j +n_i$.
\begin{equation}
\begin{aligned}
&||\textbf{W}_2(t)\sigma(\textbf{W}_{1,l} x)-\textbf{W}_{2,s}\sigma(\textbf{W}_{1,l}x)||_F\leq ||\sum_k \alpha_{i_k}n_{i_k}||_F\\
&||\textbf{V}_2(t)\sigma(\textbf{W}_{1,l} x)-\textbf{V}_{2,s}\sigma(\textbf{W}_{1,l}x)||_F\leq ||\sum_k \beta_{i_k}n_{i_k}||_F.
\end{aligned}
\end{equation}

Let $\textbf{V}_{1}(t)= \textbf{W}^+_{2}(t)\textbf{W}_{2}(0)\textbf{V}_{1}(0)$, where $\textbf{W}^+_{2}(t)$ is the Moore-Penrose pseudoinverse. Note that the set $rank\ \textbf{W}_{2}\neq d_y$ has zero measure. We only need to consider the case $rank\ \textbf{W}_{2}(t)= d_y$, then $\textbf{W}_{2}(t)\textbf{V}_{1}(t)=\textbf{W}_{2}(0)\textbf{V}_{1}(0)$. To estimate the loss, note that $L$ is locally Lipschitzian, we have:
\begin{equation}\nonumber
\begin{aligned}
&\mathbb{E} \ |L(f_1(\textbf{V}_{1}(t),\textbf{V}_{2}(t),\textbf{W}_{2}(t),x),y)\\
&-L(f_1(\textbf{V}_{1}(0),\textbf{V}_{2}(0),\textbf{W}_{2}(0),x),y)| +\kappa| R(\xi(t))- R(\xi(0))|\\
&\leq \mathbb{E} \  L_0||\sum_k \alpha_{i_k}n_{i_k}||_F +L_0||\textbf{W}_{2}(0)\textbf{V}_{1}(0)||\cdot G_0||\sum_k \beta_{i_k}n_{i_k}||_F\\
&\sim O(\mathbb{E}_{x,y} \ |n|)\sim O(\epsilon_{m,\eta}).
\end{aligned}
\end{equation}
Step 3: $\textbf{W}_{2,s}$ and $\textbf{V}_{2,s}$ are $m-l$ sparse, so changing the l rows in $\textbf{W}_{1,l}$ will not influence the loss. We consider a path to change these l rows to be same as $\textbf{W}^*_{1}$ and the loss on this path is constant. The second step is to construct a path $(\textbf{W}_{2}(t),\textbf{W}_{1}(t),\textbf{V}_{1}(t))$ with $\textbf{W}_{2}(0) =\textbf{W}_{2,s}, \textbf{W}_{2}(1) =\textbf{W}^*_{2}, \ \textbf{W}_{2}(1)\textbf{V}_{1}(1)=0$. As the proof of Lemma \ref{l2}, since the loss is convex for the final layer, the loss on this path is bounded by the two endpoints, and note that:
\begin{equation}\nonumber
\begin{aligned}
&F(\textbf{W}^*_{1},\textbf{W}^*_{2},\textbf{V}^*_{1},\textbf{V}^*_{2},\theta^*)=F(\textbf{W}^*_{1},\textbf{W}^*_{2},\textbf{V}^*_{1}=\textbf{0},\textbf{V}^*_{2}=\textbf{0},\theta^*=\textbf{0})\\
&\leq \inf_{||w_{1,i}||_2= 1,||\textbf{W}_2||_0\leq l}\mathbb{E}_{X,Y\sim P} \ L(Y,\textbf{W}_2\sigma(\textbf{W}_1 X))+\kappa||\textbf{W}_2||_1.
\end{aligned}
\end{equation}
The properties are satisfied.
\end{proof}

\subsection{Discussion}
Theorem \ref{mt} is a generalization of Theorem \ref{pt} in \cite{freeman2016topology} and the linear case Theorem \ref{t1}. As pointed out in section \ref{s3}, this shows that for all local minima worse than the global minimum $F^*$ of two-layer networks with $l=m^{\eta}$ hidden nodes, the depth is bounded by $O(m^{\frac{\eta-1}{n}})$, so that as $m\to \infty $, $\Omega_F(\lambda)$ is nearly connected if $\lambda>F^*$. Benefitting from the strong expressiveness of the two-layer ReLU network, for almost all the learning problem, $F^*$ in this theorem will be much better than that in the linear case Theorem \ref{t1}.

\section{Conclusion}
In this paper, we studied the loss landscape of the multi-layer nonlinear neural network with a skip connection and consider the connectedness of sub-level sets. The main theorem reveals that by virtue of the skip connection, under mild conditions all the local minima worse than the global minimum of the two-layer ReLU network will be very shallow, such that the ``depths" of these local minima are at most $O(m^{\frac{\eta-1}{n}})$, where $\eta<1$, $m$ is the number of the hidden nodes in the first layer, and $n$ is the dimension of the input data. This result shows that despite the non-convexity of the non-linear networks, skip connections provably help to reform the loss landscape, and in the over-parametrization($m\to \infty$) case, nearly all the strict local minima are no worse than the global minimum of the two-layer ones. Our results provide a theoretical explanation of the effectiveness of the skip connections and take a step to understand the mysterious effectiveness of deep learning networks.

\section*{Acknowledgments}
The authors thanks Peng Zhang and Wenyu Zhang for their great help. This research was funded by funded by the National Key Research and Development Program of China, grant number 2018YFC0831300 and China Postdoctoral Science Foundation (grant number:2018M641172).

\bibliographystyle{named}
\bibliography{References}

\end{document}